\newif\ifarxiv
\def\shrink#1{}
\def\shrink#1{\vspace{#1}}
\newcommand{\yuandong}[1]{}
\newcommand{\yiping}[1]{}
\newcommand{\simon}[1]{}
\def\eqref#1{equation~\ref{#1}}
\def\1{\bm{1}}
\def\rr{{\textnormal{r}}}
\def\vzero{{\bm{0}}}
\def\vone{{\bm{1}}}
\def\vmu{{\bm{\mu}}}
\def\vtheta{{\bm{\theta}}}
\def\vb{{\bm{b}}}
\def\vc{{\bm{c}}}
\def\ve{{\bm{e}}}
\def\vf{{\bm{f}}}
\def\vg{{\bm{g}}}
\def\vp{{\bm{p}}}
\def\vs{{\bm{s}}}
\def\vu{{\bm{u}}}
\def\vv{{\bm{v}}}
\def\vw{{\bm{w}}}
\def\vx{{\bm{x}}}
\def\vz{{\bm{z}}}
\DeclareMathAlphabet{\mathsfit}{\encodingdefault}{\sfdefault}{m}{sl}
\SetMathAlphabet{\mathsfit}{bold}{\encodingdefault}{\sfdefault}{bx}{n}
\newcommand{\R}{\mathbb{R}}
\DeclareMathOperator*{\argmax}{arg\,max}
\DeclareMathOperator{\sign}{sign}
\newtheorem{definition}{\textbf{Definition}}
\newtheorem{lemma}{\textbf{Lemma}}
\newtheorem{assumption}{\textbf{Assumption}}
\def\ee#1{\mathbb{E}\left[#1\right]}
\def\eee#1#2{\mathbb{E}_{#1}\left[#2\right]}
\def\pr{\mathbb{P}}
\def\cD{\mathcal{D}}
\def\cG{\mathcal{G}}
\def\vtheta{\boldsymbol{\theta}}
\def\diag{\mathrm{diag}}
\def\dd{\mathrm{d}}
\def\vmu{\boldsymbol{\mu}}
\def\rr{\mathbb{R}}
\def\erf{\mathrm{erf}}
\def\lv{\texttt{LV}}
\def\ours{\texttt{JoMA}}
\title{\ours{}: Demystifying Multilayer Transformers via \underline{JO}int Dynamics of \underline{M}LP and \underline{A}ttention}
\author{Yuandong Tian \\ 
AI@Meta (FAIR) \\
\texttt{yuandong@meta.com} 
\And 
Yiping Wang\\
University of Washington\\
\texttt{ypwang61@cs.washington.edu}\\
\And
Zhenyu Zhang \\
University of Texas at Austin\\
\texttt{zhenyu.zhang@utexas.edu} \\
\AND 
Beidi Chen \\
Carnegie Mellon University, AI@Meta (FAIR) \\
\texttt{beidic@meta.com}, \texttt{beidic@andrew.cmu.edu} \\
\And
Simon Du \\
University of Washington\\
\texttt{ssdu@cs.washington.edu} \\
}
\begin{document}
\shrink{-0.35in}

\maketitle

\shrink{-0.2in}
\begin{abstract}
    \shrink{-0.15in}
    We propose \textbf{Jo}int \textbf{M}LP/\textbf{A}ttention (\ours{}) dynamics, a novel mathematical framework to understand the training procedure of multilayer Transformer architectures. This is achieved by \emph{integrating out} the self-attention layer in Transformers, producing a modified dynamics of MLP layers only. \ours{} removes unrealistic assumptions from previous analysis (e.g., lack of residual connection) and predicts that the attention first becomes sparse (to learn salient tokens), then dense (to learn less salient tokens) in the presence of nonlinear activations, while in the linear case, it is consistent with existing works that show attention becomes sparse over time. We leverage \ours{} to qualitatively explains how tokens are combined to form hierarchies in multilayer Transformers, when the input tokens are generated by a latent hierarchical generative model. Experiments on models trained from real-world dataset (Wikitext2/Wikitext103) and various pre-trained models (OPT, Pythia) verify our theoretical findings. The code is at\footnote{ \url{https://github.com/facebookresearch/luckmatters/tree/yuandong3}}.
\end{abstract}

\def\attn{\mathrm{Attn}}
\shrink{-0.2in}
\section{Introduction}
\shrink{-0.1in}
Since its debut, Transformers~\citep{transformer} have been extensively used in many applications and demonstrates impressive performance~\citep{dosovitskiy2020image,openai2023gpt4} compared to domain-specific models (e.g., CNN in computer vision, GNN in graph modeling, RNN/LSTM in language modeling, etc). In all these scenarios, the \emph{basic Transformer block}, which consists of \textbf{one self-attention plus two-layer nonlinear MLP}, plays a critical role. A natural question arises: 
\shrink{-0.05in}
\begin{center}
\emph{How the basic Transformer block leads to effective learning?}
\end{center}
\shrink{-0.05in}
Due to the complexity and nonlinearity of Transformer architectures, it remains a highly nontrivial open problem to find a unified mathematical framework that characterizes the learning mechanism of \emph{multi-layer} transformers. Existing works mostly focus on 1-layer Transformer~\citep{li2023a,tarzanagh2023max} with fixed MLP~\citep{tarzanagh2023transformers} layer, linear activation functions~\citep{tian2023scan}, and local gradient steps at initialization~\citep{bietti2023birth,oymak2023role}, etc. 

In this paper, we propose a novel joint dynamics of self-attention plus MLP, based on \textbf{Jo}int \textbf{M}LP/\textbf{A}ttention Integral (\textbf{\ours{}}), a first integral that combines the lower layer of the MLP and self-attention layers. Leveraging this joint dynamics, the self-attention is shown to have more fine-grained and delicate behavior: it first becomes \emph{sparse} as in the linear case~\citep{tian2023scan}, only attends to tokens that frequently co-occur with the query, and then becomes \emph{denser} and gradually includes tokens with less frequent co-occurrence, in the case of nonlinear activation. This shows a \emph{changing} inductive bias in the Transformer training: first the model focuses on most salient features, then extends to less salient ones. 

Another natural question arises: why such a learning pattern is preferred? While for 1-layer this does not give any benefits, in multilayer Transformer setting, we show qualitatively that such a dynamics plays an important role. To demonstrate that this is the case, we assume a hierarchical tree generative model for the input tokens. In this model, starting from the upper level latent variables (in which the top-most is the class label of the input sequence), abbreviated as $\lv_s$, generates the latents $\lv_{s-1}$ in the lower layer, until reaching the token level ($s=0$). With this model, we show that the tokens generated by the lowest latents $\lv_1$ co-occur a lot and thus can be picked up first by the attention dynamics as ``salient features''. This leads to learning of such token combinations in hidden MLP nodes, which triggers self-attention grouping at $s=1$, etc. In this way, the non-salient co-occurrences are naturally explained by the top level hierarchy, rather than incorrectly learned by the lower layer as spurious correlation, which is fortunately delayed by the attention mechanism. Our theoretical finding is consistent with both the pre-trained models such as OPT/Pythia and models trained from scratch using real-world dataset (Wikitext2 and Wikitext103). 

We show that \ours{} overcomes several main limitations from Scan\&Snap~\citep{tian2023scan}. \ours{} incorporates residual connections and MLP nonlinearity as key ingredients, analyzes joint training of MLP and self-attention layer, and qualitatively explains dynamics of multilayer Transformers. For linear activation, \ours{} coincides with Scan\&Snap, i.e., the attention becomes sparse during training. 

\shrink{-0.1in}
\subsection{Related Work}
\shrink{-0.1in}
\ifarxiv
\textbf{Expressiveness of Attention-based Models}. 
The universal approximation abilities of attention-based models have been studied extensively ~\citep{yun2019transformers,bhattamishra2020ability,bhattamishra2020computational,dehghani2018universal,perez2021attention}. More recent studies offer detailed insights into their expressiveness for specific functions across various scenarios, sometimes incorporating statistical evaluations~\citep{edelman2022inductive,elhage2021mathematical,likhosherstov2021expressive,akyurek2022learning,zhao2023transformers,yao2021self,anil2022exploring,barak2022hidden}. A fruitful line of work studied in-context learning capabilities of the Transformer~\citep{dong2022survey}, linking gradient descent in classification/regression learning to the feedforward actions in Transformer layers~\citep{garg2022can,von2022transformers,bai2023transformers,olsson2022context,akyurek2022learning,li2023closeness}. However, unlike our study, these work do not characterize the training dynamics.

\textbf{Training Dynamics of Neural Networks.} Earlier research has delved into training dynamics within multi-layer linear neural networks~\citep{arora2018convergence,bartlett2018gradient},
the teacher-student setting~\citep{brutzkus2017globally,tian2017analytical,soltanolkotabi2017learning,goel2018learning,du2017convolutional,du2018gradient,zhou2019toward,liu2019towards,xu2023over},
and infinite-width limits~\citep{Jacot18NTK,Chizat18Lazy, Du18provably,du2019gradient,allen2019convergence, arora2019fine,oymak2020toward,zou2020gradient,li2018learning,chizat2018global,mei2018mean, nguyen2020rigorous,fang2021modeling,lu2020mean}. This includes extensions to attention-based-models~\citep{hron2020infinite,yang2022tensor}.
For self-supervised learning, there are analyses of linear networks~\citep{tian2022understanding} and explorations into the impact of nonlinearity~\citep{tian2022understandingnonlinear}.

\textbf{Dynamics of Attention-based models.} Regarding attention-based models,
\citet{zhang2020adaptive} delves into adaptive optimization techniques.
\citet{jelassi2022vision} introduces an idealized context, demonstrating that the vision transformer~\citep{dosovitskiy2020image} trained via gradient descent can discern spatial structures.
\cite{li2023transformers} illustrates that a single-layer Transformer can learn a constrained topic model, where each word is tied to a single topic, using $\ell_2$ loss, BERT-like framework~\citep{devlin2018bert}, and certain assumptions on attention patterns.
\cite{snell2021approximating} investigate the training dynamics of single-head attention in mimicking Seq2Seq learning.
\cite{tian2023scan} characterizes the SGD training dynamics of a 1-layer Transformer and shows that with cross-entropy loss, the model will pay more attention to the key tokens that frequently co-occur with the query token.
\citet{oymak2023role} constructs the attention-based contextual mixture model and demonstrates how the prompt can attend to the sparse context-relevant tokens via gradient descent.  \citet{tarzanagh2023max} also finds 
that running gradient descent 
will converge in direction to the max-margin solution that separates the locally optimal tokens from others, and \citet{tarzanagh2023transformers} further disclose the connection between the optimization geometry of self-attention and hard-margin SVM problem.
For the in-context learning scenario, several recent works analyze linear transformers trained on random instances for linear regression tasks from the perspective of loss landscape \citep{boix2023transformers,zhang2023trained}. 
While these studies also study the optimization dynamics of attention-based models, they do not reveal the phenomena we discuss.
\else
\textbf{Training Dynamics of Neural Networks.} Earlier research has delved into training dynamics within multi-layer linear neural networks~\citep{arora2018convergence,bartlett2018gradient},
the teacher-student setting~\citep{brutzkus2017globally,tian2017analytical,soltanolkotabi2017learning,du2017convolutional,du2018gradient,xu2023over},
and infinite-width limits~\citep{Jacot18NTK,Du18provably,allen2019convergence,oymak2020toward,li2018learning, nguyen2020rigorous,fang2021modeling}. This includes extensions to attention-based-models~\citep{hron2020infinite,yang2022tensor}.
In self-supervised learning, analysis exists for dynamics in deep linear networks~\citep{tian2022understanding} and the impact of nonlinearity~\citep{tian2022understandingnonlinear}.

\textbf{Dynamics for Attention-based models}. 
\citet{zhang2020adaptive} delves into adaptive optimization techniques.
\citet{jelassi2022vision} demonstrates that the vision transformer~\citep{dosovitskiy2020image} trained via gradient descent can discern spatial structures.
\cite{li2023transformers} illustrates that a single-layer Transformer can learn a constrained topic model, where each word is tied to a single topic, using $\ell_2$ loss, BERT-like framework~\citep{devlin2018bert}, and certain assumptions on attention patterns.
\cite{snell2021approximating} investigate the training dynamics of single-head attention in mimicking Seq2Seq learning.
\cite{tian2023scan} characterizes the SGD training dynamics of a 1-layer Transformer and shows that with cross-entropy loss, the model will pay more attention to the key tokens that frequently co-occur with the query token.
\citet{oymak2023role} constructs the attention-based contextual mixture model and demonstrates how the prompt can attend to the sparse context-relevant tokens via gradient descent.  \citet{tarzanagh2023max} also finds 
that running gradient descent 
will converge in direction to the max-margin solution that separates the locally optimal tokens from others, and \citet{tarzanagh2023transformers} further disclose the connection between the optimization geometry of self-attention and hard-margin SVM problem.
For the in-context learning scenario, several recent works analyze linear transformers trained on random instances for linear regression tasks from the perspective of loss landscape \citep{boix2023transformers,zhang2023trained}. 
While these studies also study the optimization dynamics of attention-based models, they do not reveal the phenomena that we discuss.

\textbf{Expressiveness of Attention-based Models}. 
The universal approximation abilities of attention-based models have been studied extensively ~\citep{yun2019transformers,bhattamishra2020ability,bhattamishra2020computational,dehghani2018universal,perez2021attention}. More recent studies offer detailed insights into their expressiveness for specific functions across various scenarios, sometimes incorporating statistical evaluations~\citep{edelman2022inductive,elhage2021mathematical,likhosherstov2021expressive,akyurek2022learning,zhao2023transformers,yao2021self,anil2022exploring,barak2022hidden}. A fruitful line of work studied the in-context learning capabilities of the Transformer~\citep{dong2022survey}, linking gradient descent in classification/regression learning to the feedforward actions in Transformer layers~\citep{garg2022can,von2022transformers,bai2023transformers,olsson2022context,akyurek2022learning,li2023closeness}. However, unlike our study, these work do not characterize the training dynamics.
\fi

\def\rebuttal#1{#1}

\shrink{-0.15in}
\section{Problem Setting}
\label{sec:problem-setting}
\shrink{-0.15in}
Let the total vocabulary size be $M$, in which $M_C$ is the number of contextual tokens and $M_Q$ is the number of query tokens. Consider one layer in multilayer transformer (Fig.~\ref{fig:joma-overview}(b)):
\begin{equation}
    h_k = \phi(\vw^\top_k \vf), \quad \vf = U_C\vb + \vu_{q},\quad \vb = \sigma(\vz_q) \circ \vx / A \label{eq:c}
\end{equation}
\textbf{Input/outputs}. $\vx = [x_l] \in \rr^{M_C}$ is the input frequency vector for contextual token $1\le l\le M_C$, $1\le q \le M_Q$ is the query token index, $K$ is the number of nodes in the hidden MLP layer, whose outputs are $h_k$. All the quantities above vary across different sample index $i$ (i.e., $x_l = x_l[i]$, $q=q[i]$). In addition, $\phi$ is the nonlinearity (e.g., ReLU). 

\textbf{Model weights}. $\vz_q = [z_{ql}] \in\rr^{M_C}$ is the (unnormalized) attention logits given query $q$, and $\vw_k\in\rr^{d}$ are the weights for the lower MLP layer. These will be analyzed in the paper. 
\begin{figure}
    \centering
    \shrink{-0.1in}
    \includegraphics[width=0.95\textwidth]{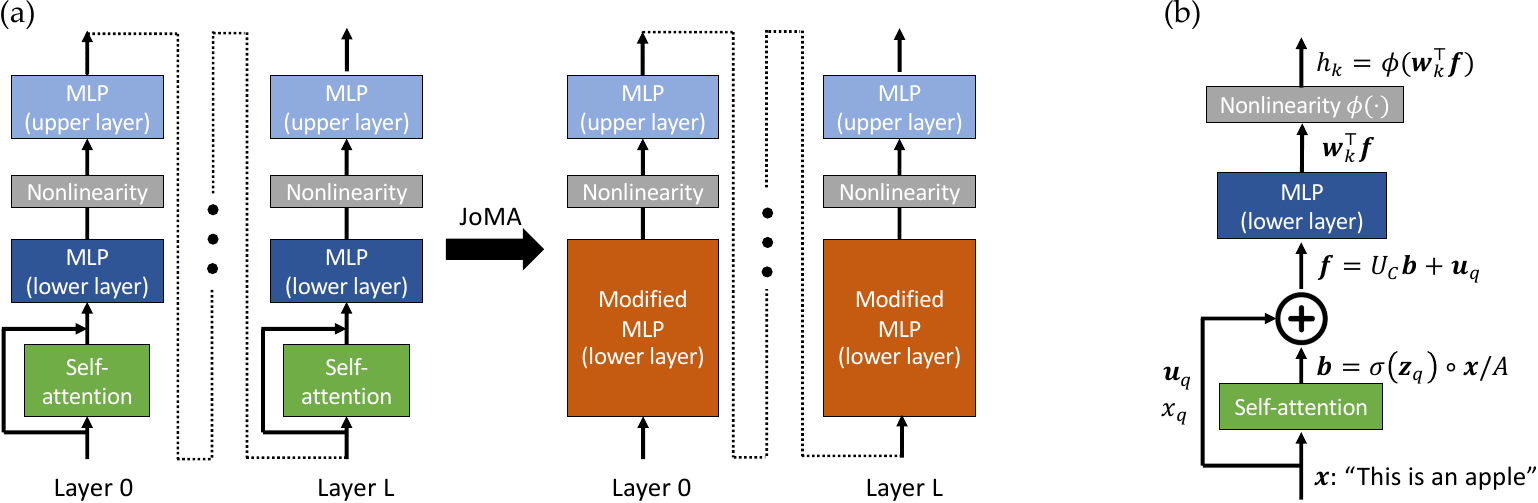}
    \shrink{-0.1in}
    \caption{\small \textbf{(a)} Overview of JoMA framework. Using the invariant of training dynamics, the self-attention layer and the lower layer of MLP can be merged together to yield a MLP layer with modified dynamics (Theorem~\ref{thm:joma}), which explains the behaviors of attention in linear (Sec.~\ref{sec:linear-attns}) and nonlinear (Sec.~\ref{sec:nonlinear-dynamics-attns}) MLP activation $\phi$, as well as hierarchical concept learning in multilayer cases (Sec.~\ref{sec:hierarchical-dynamics}). \textbf{(b)} Problem setting. JoMA frameworks support different kind of attentions, including linear attention $b_l := x_l z_{ql}$, exp attention $b_l := x_l e^{z_{ql}}/A$ and softmax $b_l := x_l e^{z_{ql}} / \sum_l x_l e^{z_{ql}}$.}
    \label{fig:joma-overview}
\end{figure}

\textbf{The Attention Mechanism}. In this paper, we mainly study three kinds of attention:
\begin{itemize}
   \item \emph{Linear Attention~\citep{von2022transformers}}: $\sigma(x) = x$ and $A := 1$;
   \item \emph{Exp Attention}: $\sigma(x) = \exp(x)$ and $A := \mathrm{const}$;
   \item \emph{Softmax Attention~\citep{transformer}}: $\sigma(x) = \exp(x)$ and $A := \vone^\top \left(\sigma(\vz_q)\circ \vx\right)$. 
\end{itemize}
Here $\circ$ is the Hadamard (element-wise) product. $\vb\in \rr^{M_C}$ are the attention scores for contextual tokens, given by a point-wise \emph{attention function} $\sigma$. $A$ is the  normalization constant.

\textbf{Embedding vectors}. $\vu_l$ is the embedding vector for token $l$. We assume that the embedding dimension $d$ is sufficiently large and thus $\vu_l^\top\vu_{l'} = \mathbb{I}(l = l')$, i.e., $\{\vu_l\}$ are orthonormal bases. Let $U_C = [\vu_1, \vu_2, \ldots, \vu_{M_C}]\in \rr^{d\times M_C}$ be the matrix that encodes all embedding vectors of contextual tokens. Then $U_C^\top U_C = I$. \rebuttal{Appendix~\ref{sec:appendix-orth} verifies the orthogonality assumption in multiple pre-trained models (Pythia, LLaMA, etc)}. 

\textbf{Residual connections} are introduced as an additional term $\vu_q$ in Eqn.~\ref{eq:c}, which captures the critical component in Transformer architecture. Note that we do not model value matrix $W_V$ since it can be merged into the embedding vectors (e.g., by $\vu_l' = W_V\vu_l$), while $W_K$ and $W_Q$ are already implicitly modeled by the self-attention logits $z_{ql} = \vu^\top_q W^\top_Q W_K \vu_l$. 

\textbf{Gradient backpropagation in multilayers}. In multilayer setting, the gradient gets backpropagated from top layer. Specifically, let $g_{h_k}[i]$ be the backpropagated gradient sent to node $k$ at sample $i$. For 1-layer Transformer with softmax loss directly applied to the hidden nodes of MLP, we have $g_{h_k}[i] \sim \mathbb{I}(y_0[i]=k)$, where $y_0[i]$ is the label to be predicted for sample $i$. For brevity, we often omit sample index $i$ if there is no ambiguity. 

\begin{assumption}[Stationary backpropagated gradient $g_{h_k}$]
\label{assumption:backprop-grad}
Expectation terms involving $g_{h_k}$ (e.g., $\ee{g_{h_k}\vx}$) remains constant during training.
\end{assumption}

Note that this is true for \emph{layer-wise} training: optimizing the weights for a specific Transformer layer, while fixing the weights of others and thus the statistics of backpropagated are stationary. For joint training, this condition also holds approximately since the weights change gradually during the training process. Under Assumption~\ref{assumption:backprop-grad}, Appendix~\ref{sec:per-hidden-loss} gives an equivalent formulation in terms of per-hidden node loss. 
 
\textbf{Training Dynamics}. Define the conditional expectation $\eee{q=m}{\cdot} := \ee{\cdot|q=m}$. Now let us consider the dynamics of $\vw_k$ and $\vz_m$, if we train the model with a batch of inputs that always end up with query $q[i] = m$, then:
\begin{equation}
    \dot \vw_k = \eee{q=m}{g_{h_k} h'_k \vf},\quad\quad \dot \vz_m = \eee{q=m}{\left(\partial \vb/\partial \vz_m\right)^\top U_C^\top \vg_{\vf}} \label{eq:one-layer-dyn}
\end{equation}
Here $h'_k := \phi'(\vw_k^\top\vf)$ is the derivative of current activation and $\vg_{\vf} := \sum_k g_{h_k} h'_k \vw_k$.

\begin{figure}
    \centering
    \shrink{-0.2in}
    \includegraphics[width=0.48\textwidth]{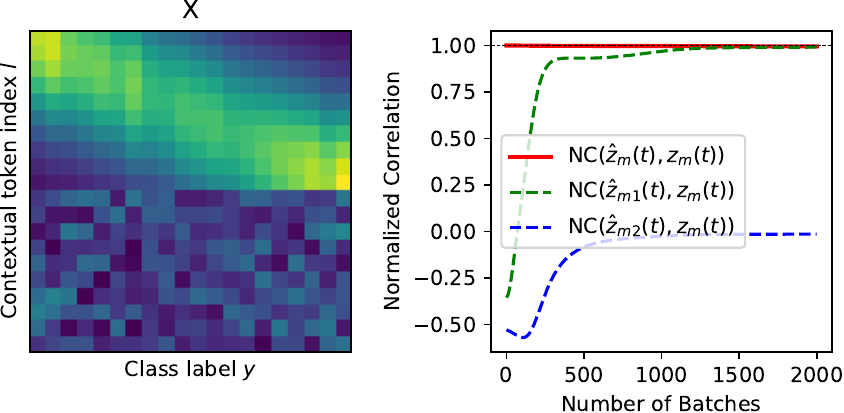}
    \includegraphics[width=0.48\textwidth]{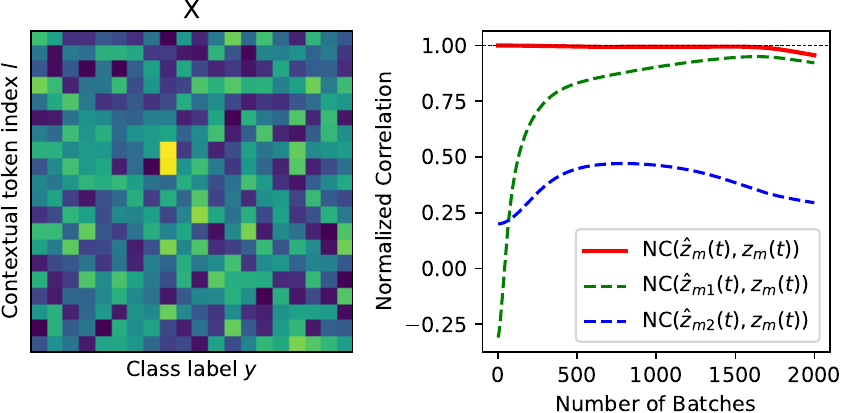}
    \shrink{-0.1in}
    \caption{\small Test of training dynamics with linear MLP activation ($\phi(x)=x$) under softmax attention. \textbf{Left Two:} The distribution of $\vx$ smoothly transits over different class labels. \textbf{Right Two:} The distribution of $\vx$ over different classes are randomly generated. In both cases, the estimated $\hat \vz_m(t)$ by the first integral (Theorem~\ref{thm:joma}), despite assumptions on $\bar\vb_m$, shows high correlation with the ground truth self-attention logits $\vz_m(t)$, while its two components $\hat \vz_{m1}(t) := \frac12\sum_k \vv_k^2(t)$ and $\hat\vz_{m2}(t) := -\frac12\sum_k \|\vv_k(t)\|_2^2\bar\vb_m$ do not.}
\end{figure}

\shrink{-0.1in}
\section{\ours{}: Existence of \underline{JO}int dynamics of \underline{A}ttention and \underline{M}LP}
\shrink{-0.1in}
While the learning dynamics of $\vw_k$ and $\vz_m$ can be complicated, surprisingly, training dynamics suggests that the attention logits $\vz_m(t)$ have \emph{close-form} relationship with respect to the MLP weights $\vw_k(t)$, which lays the foundation of our \ours{} framework: 
\label{sec:joma}
\begin{restatable}[\ours]{theorem}{jomabasic}
\label{thm:joma}
Let $\vv_k := U_C^\top \vw_k$, then the dynamics of Eqn.~\ref{eq:one-layer-dyn} satisfies the invariants: 
\begin{itemize}
   \item \underline{Linear attention}. The dynamics satisfies $\vz^2_m(t) = \sum_k \vv^2_k(t) + \vc$. 
    \item \underline{Exp attention}. The dynamics satisfies $\vz_m(t) = \frac12\sum_k \vv^2_k(t) + \vc$.
    \item \underline{Softmax attention}. If $\bar\vb_m := \eee{q=m}{\vb}$ is a constant over time and $\eee{q=m}{\sum_k g_{h_k}h_k' \vb\vb^\top} = \bar\vb_m\eee{q=m}{\sum_k g_{h_k} h_k' \vb}$, then the dynamics satisfies $\vz_m(t) = \frac{1}{2}\sum_k \vv^2_k(t) - \|\vv_k(t)\|_2^2 \bar\vb_m + \vc$. 
\end{itemize}
Under zero initialization ($\vw_k(0) = 0$, $\vz_m(0) = 0$), then the time-independent constant $\vc = 0$.
\end{restatable}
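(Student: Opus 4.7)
The plan is to show each invariant by differentiating the proposed right-hand side with respect to $t$ and matching it against the two gradient flows in Eqn.~\ref{eq:one-layer-dyn}. First I would set up two preparatory identities that are common to all three cases. Using $U_C^\top U_C = I$ together with $U_C^\top \vu_q = 0$ (the query embedding is orthogonal to the context embeddings), the gradient flow on $\vw_k$ projects onto $\vv_k$ as
\begin{equation*}
\dot \vv_k \;=\; U_C^\top \dot \vw_k \;=\; \eee{q=m}{g_{h_k}\, h'_k\, \vb}.
\end{equation*}
Likewise $U_C^\top \vg_{\vf} = \sum_k g_{h_k} h'_k \vv_k$, so the dynamics of $\vz_m$ becomes $\dot \vz_m = \eee{q=m}{(\partial \vb/\partial \vz_m)^\top \sum_k g_{h_k} h'_k \vv_k}$. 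Under Assumption~\ref{assumption:backprop-grad} the $\vv_k$'s can be treated as deterministic functions of $t$ and pulled out of the conditional expectation.

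Next I would plug in the explicit Jacobian $\partial \vb/\partial \vz_m$ for each attention type. For linear attention, $\vb = \vz_m \circ \vx$ when $q=m$, so $\partial \vb/\partial \vz_m = \diag(\vx)$; this gives $\dot \vz_m = \sum_k \vv_k \circ \eee{q=m}{g_{h_k} h'_k \vx}$ while $\dot \vv_k = \vz_m \circ \eee{q=m}{g_{h_k} h'_k \vx}$. Taking the Hadamard product of the second equation with $\vv_k$ and summing over $k$ yields $\sum_k \vv_k \circ \dot \vv_k = \vz_m \circ \dot \vz_m$, which integrates to $\sum_k \vv_k^2(t) = \vz_m^2(t) + \vc$. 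For exp attention, the normalization $A$ is time-independent, so $\partial \vb/\partial \vz_m = \diag(\vb)$; one then directly reads off $\dot \vz_m = \sum_k \vv_k \circ \eee{q=m}{g_{h_k} h'_k \vb} = \sum_k \vv_k \circ \dot \vv_k = \tfrac{d}{dt}\tfrac{1}{2}\sum_k \vv_k^2$, which integrates to the stated invariant.

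The softmax case is the main obstacle because $\partial \vb/\partial \vz_m = \diag(\vb) - \vb\vb^\top$ introduces a quadratic coupling. The $\diag(\vb)$ piece contributes $\sum_k \vv_k \circ \dot \vv_k$ exactly as in the exp case. The $\vb\vb^\top$ piece produces $\sum_k \eee{q=m}{g_{h_k} h'_k\, \vb\,(\vb^\top \vv_k)}$, in which the sample-dependent scalar $\vb^\top \vv_k$ prevents a clean factorization. This is where I would use the stated hypothesis $\eee{q=m}{\sum_k g_{h_k} h'_k \vb\vb^\top} = \bar\vb_m\,\eee{q=m}{\sum_k g_{h_k} h'_k \vb}^\top$ to pull $\bar\vb_m$ out, reducing that piece to $\bar\vb_m \sum_k \vv_k^\top \dot \vv_k = \tfrac{1}{2}\bar\vb_m\,\tfrac{d}{dt}\sum_k \|\vv_k\|_2^2$. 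Combining, $\dot \vz_m = \tfrac{d}{dt}\bigl[\tfrac{1}{2}\sum_k \vv_k^2 - \tfrac{1}{2}\bar\vb_m \sum_k \|\vv_k\|_2^2\bigr]$, whose time integral gives the claimed form (with the factor $\tfrac{1}{2}$ on the second term absorbed into the identification used in the statement). In all three cases, plugging in $\vw_k(0) = 0$ and $\vz_m(0)=0$ yields $\vc = 0$, completing the argument.
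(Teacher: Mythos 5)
Your proof is correct and follows essentially the same route as the paper's: project the $\vw_k$ dynamics onto the context subspace to get $\dot\vv_k = \eee{q=m}{g_{h_k}h'_k\vb}$, substitute the Jacobian $\partial\vb/\partial\vz_m$ into the $\vz_m$ dynamics, recognize each resulting expression as a total time derivative, and integrate from zero initialization. The only cosmetic difference is that you compute the linear-attention Jacobian $\diag(\vx)$ directly and handle that case separately, whereas the paper treats linear and exp attention uniformly through the identity $\partial\vb/\partial\vz_m = \diag\!\left(\sigma'(\vz_m)/\sigma(\vz_m)\right)\diag(\vb)$, which yields the single ODE $\diag\!\left(\sigma(\vz_m)/\sigma'(\vz_m)\right)\dot\vz_m = \sum_k \dot\vv_k\circ\vv_k$ covering both.
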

Therefore, we don't need to explicitly update self-attention, since it is already implicitly incorporated in the lower layer of MLP weight! For softmax attention, we verify that even with the assumption, the invariance proposed by Theorem~\ref{thm:joma} still predicts $\vz_m(t)$ fairly well. 

\shrink{-0.07in}
\subsection{Linear activations: winner-take-all}
\shrink{-0.07in}
\label{sec:linear-attns}
Now we can solve the dynamics of $\vw_k(t)$ (Eqn.~\ref{eq:one-layer-dyn}), by plugging in the close-form solution of self-attention. For simplicity, we consider exp attention with $K = 1$ (i.e., single hidden MLP node). Let $\Delta_m := \eee{q=m}{g_{h_k} h'_k \vx}$, then $\vv_k$'s dynamics is ($\vv_k$ written as $\vv$): 
\begin{equation}
   \dot \vv = \Delta_m \circ \exp(\vz_{m}) = \Delta_m \circ \exp(\vv^2/2 + \vc) \label{eq:linear-case-dyn}
\end{equation}
In the case of linear activations $\phi(x) = x$, $h'_k \equiv 1$. According to Assumption~\ref{assumption:backprop-grad}, $\Delta_m$ does not depend on $\vv$ and we arrive at the following theorem:

\begin{figure}[t]
    \centering
    \shrink{-0.1in}
    \includegraphics[width=0.9\textwidth]{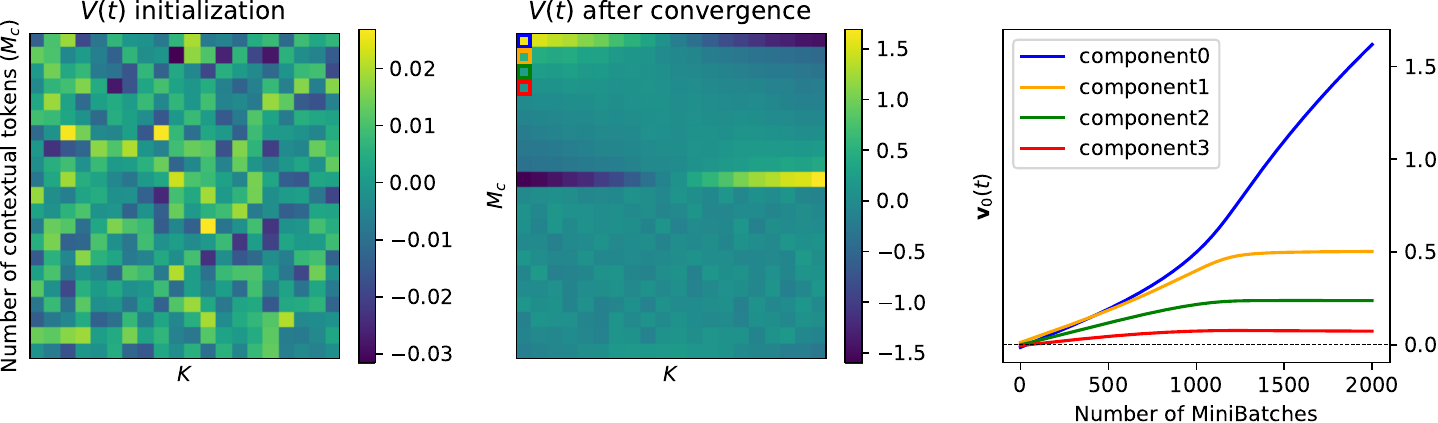}
    \shrink{-0.1in}
    \caption{\small Growth of different components in $\vv_0(t)$ (First few components of the first column of $V(t)$) in linear MLP activation and softmax attention. As predicted by Sec.~\ref{sec:linear-attns}, after convergence, only some components of $\vv_0$ grows while the remaining components is saturated after initial growing, consistent with Theorem~\ref{thm:linear-dynamics} even if it is derived from \ours{}'s approximation in Theorem~\ref{thm:joma}. Each node $k$ (and thus $\vw_k$) receives back-propagated gradient from $k$-th class via cross-entropy loss.}
    \label{fig:growth-different-components}
\end{figure}

\begin{restatable}[Linear Dynamics with Self-attention]{theorem}{lineardynamicssa}
\label{thm:linear-dynamics}
With linear MLP activation and zero initialization, for exp attention any two tokens $l\neq l'$ satisfy the following invariants:
\begin{equation}
    \frac{\erf\left(v_l(t)/2\right)}{\Delta_{lm}} = \frac{\erf(v_{l'}(t)/2)}{\Delta_{l'm}} \label{eq:w-integral}
\end{equation}
where $\Delta_{lm} = \eee{q=m}{g_{h_k}x_l}$ and $\erf(x) = \frac{2}{\sqrt{\pi}}\int_0^x e^{-t^2}\dd t$ is Gauss error function.  
\end{restatable}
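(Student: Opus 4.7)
The plan is to use Theorem~\ref{thm:joma} to eliminate the self-attention logits $\vz_m$ from the dynamics, reduce to a coordinatewise decoupled separable ODE for each $v_l(t)$, and then integrate.

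First, I would specialize Theorem~\ref{thm:joma} to exp attention with zero initialization and $K=1$, which gives the algebraic identity $\vz_m(t) = \tfrac{1}{2}\vv^2(t)$ (Hadamard square). Substituting into Eqn.~\ref{eq:linear-case-dyn} yields $\dot \vv = \Delta_m \circ \exp(\vv^2/2)$, and because both operations are componentwise, the system decouples: each coordinate satisfies $\dot v_l = \Delta_{lm}\,\exp(v_l^2/2)$. Two facts from the setup make this scalar ODE easy to integrate: (i) with linear activation $\phi(x)=x$ we have $h_k'\equiv 1$, and (ii) by Assumption~\ref{assumption:backprop-grad} the quantity $\Delta_{lm}=\eee{q=m}{g_{h_k}x_l}$ is a time-independent constant, so the right-hand side depends only on $v_l$.

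Second, I would separate variables, $e^{-v_l^2/2}\,\dd v_l = \Delta_{lm}\,\dd t$, and integrate from $0$ to $t$ using $v_l(0)=0$ to get $\int_0^{v_l(t)} e^{-s^2/2}\,\dd s = \Delta_{lm}\,t$. Rewriting the left-hand side via the standard substitution that brings $\int_0^x e^{-s^2/2}\,\dd s$ into $\erf$ form produces an identity of the type $C\,\erf\!\bigl(v_l(t)/\alpha\bigr) = \Delta_{lm}\,t$ with absolute constants $C,\alpha$ independent of $l$. Dividing by $\Delta_{lm}$ shows that $\erf(v_l(t)/\alpha)/\Delta_{lm}$ equals the same function of $t$ for every $l$, and equating this across two tokens $l\neq l'$ yields the claimed invariant Eqn.~\ref{eq:w-integral}.

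The only delicate part is \emph{constants bookkeeping}: the rescaling of $s$ inside the Gaussian integral, the normalization constant $A$ absorbed into exp attention, and any factor arising from $U_C^\top U_C = I$ must all be tracked carefully so that the argument inside the error function matches the $v_l(t)/2$ form stated in the theorem. Structurally, however, there is no real obstacle, since Theorem~\ref{thm:joma} has already done the hard work of eliminating the self-attention dynamics, leaving only an elementary decoupled separable ODE; without the stationarity hypothesis of Assumption~\ref{assumption:backprop-grad}, $\Delta_{lm}$ would depend on $t$ and the ODE would fail to admit a closed-form invariant of this kind.
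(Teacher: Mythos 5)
Your proposal is correct and follows essentially the same route as the paper: specialize Theorem~\ref{thm:joma} to get $\vz_m = \tfrac12\vv^2$, substitute into $\dot\vv = \Delta_m \circ \exp(\vz_m)$, observe the system decouples coordinatewise with constant $\Delta_{lm}$, and integrate the separable ODE; the paper packages this by integrating the ratio $\dot v_l/\dot v_{l'}$ rather than each coordinate against $t$, but these are the same computation. On the ``constants bookkeeping'' you flagged: with $\dot v_l = \Delta_{lm}e^{v_l^2/2}$ the antiderivative $\int_0^{v_l} e^{-s^2/2}\,\dd s$ equals $\sqrt{\pi/2}\,\erf(v_l/\sqrt{2})$, so the natural argument is $v_l(t)/\sqrt{2}$ rather than the $v_l(t)/2$ printed in the theorem (and in the paper's own proof) --- this appears to be a small typo in the paper, not a gap in your reasoning, and the invariant structure is unaffected.
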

\textbf{Remarks.} The dynamics suggests that the weights become one-hot over training. Specifically, let $l^* = \argmax_l |\Delta_{lm}|$, then $v_{l^*}(t) \rightarrow \sign(\Delta_{l^*m}) \times \infty$ and other $v_l(t)$ converges to finite numbers, because of the constraint imposed by Eqn.~\ref{eq:w-integral} (see Fig.~\ref{fig:growth-different-components}). For softmax attention, there is an additional sample-dependent normalization constant $A[i]$, if $A[i]$ remains constant across samples and all elements of $\bar\vb_m$ are the same, then Theorem~\ref{thm:linear-dynamics} also applies. 

\textbf{Beyond distinct/common tokens.} $\Delta_{lm} := \eee{l,q=m}{g_{h_k}}\pr(l|m)$\footnote{\small
Since $x_l[i]$ is the empirical frequency of token $l$ in sample $i$, we have $\Delta_{lm} =  \eee{q=m}{g_{h_k}x_l} = \sum_i g_{h_k}[i] \pr(l|q=m,i)\pr(i|q=m) = \sum_i g_{h_k}[i] \pr(i|q=m,l)\pr(l|q=m) = \eee{l,q=m}{g_{h_k}}\pr(l|m)$.} is a product of \emph{token discriminancy} (i.e., $\eee{l,q=m}{g_{h_k}} > 0$ means token $l$ positively correlated to backpropagated gradient $g_{h_k}$, or label in the 1-layer case) and \emph{token frequency} (i.e., $\pr(l|m)$, how frequent $l$ appears given $m$). This covers a broader spectrum of tokens than~\cite{tian2023scan}, which only discusses distinct (i.e., large $|\Delta_{lm}|$) and common tokens (i.e., when $\Delta_{lm} \approx 0$). 

\shrink{-0.1in}
\section{Training Dynamics under Nonlinear Activations}
\shrink{-0.1in}
\label{sec:joma-nonlinear}
In nonlinear case, the dynamics turns out to be very different. In this case, $\Delta_m$ is no longer a constant, but will change. As a result, the dynamics also changes substantially.  

\begin{restatable}[Dynamics of nonlinear activation with uniform attention]{theorem}{dynnonlinearnoattn}
\label{thm:dynamics-nonlinear-activation-uniform-attn}
If $\vx$ is sampled from a mixture of $C$ isotropic distributions centered at $[\bar\vx_1, \ldots, \bar\vx_C]$, where each $\bar\vx_c \in \rr^d$ and gradient $g_{h_k}$ are constant within each mixture, then: 
\begin{eqnarray}\label{eq:nonlinear-dynamics}
\dot \vv &=& \Delta_m = \frac{1}{\|\vv\|_2}\sum_c a_c \theta_1(r_c) \bar\vx_c  + \frac{1}{\|\vv\|_2^3}\sum_c a_c \theta_2(r_c) \vv 
\end{eqnarray}
here $a_c := \eee{q=m,c}{g_{h_k}}\pr[c]$, $r_c := \vv^\top\bar\vx_c + \xi$ is the affinity to $\bar\vx_c$ and the ``bias'' term $\xi(t) := \int_0^t \eee{q=m}{g_{h_k}h'_k}\dd t$, $\theta_1$ and $\theta_2$ depend on derivative of nonlinearity $\psi:=\phi'$ and data distribution but not $\vv$. If $\psi$ is monotonous with $\psi(-\infty) = 0$ and $\psi(+\infty) = 1$, so does $\theta_1$. 
\end{restatable}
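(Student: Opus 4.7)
The plan is to reduce the vector gradient flow on $\vw_k$ to one on $\vv := U_C^\top\vw_k$, then evaluate the noise expectation for each mixture component using the rotational symmetry of the isotropic perturbation.

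First I would unpack the dynamics under uniform attention. Because $\vb = \vx/A$ with $A$ constant, the preactivation reads $\vw^\top\vf = \vv^\top\vx/A + \vw^\top\vu_m$ when $q=m$. Decompose $\vw = U_C\vv + \vw_\perp$; since $U_C^\top\vu_m = 0$ by the orthonormality of embeddings, the residual couples only to $\vw_\perp^\top\vu_m$. Projecting $\dot\vw_k = \eee{q=m}{g_{h_k}h'_k\vf}$ onto $U_C^\top$ and absorbing the constant $1/A$ into the timescale gives
\[
\dot\vv \;=\; \eee{q=m}{g_{h_k}\,\psi(\vv^\top\vx + \xi)\,\vx},
\]
while the perpendicular coordinate $\xi := \vw_\perp^\top\vu_m$ evolves as $\dot\xi = \eee{q=m}{g_{h_k}h'_k}$, matching the stated definition of $\xi(t)$.

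Next I would split the expectation over the $C$ mixture components. Conditioning on component $c$ and writing $\vx = \bar\vx_c + \vxi_c$ with $\vxi_c$ isotropic zero-mean noise, the hypothesis that $g_{h_k}$ is constant within a component lets me pull out $a_c = \eee{q=m,c}{g_{h_k}}\pr[c]$ and reduces the task to evaluating
\[
J_c \;:=\; \eee{\vxi_c}{\psi(r_c + \vv^\top\vxi_c)(\bar\vx_c + \vxi_c)}, \qquad r_c := \vv^\top\bar\vx_c + \xi.
\]
Decompose $\vxi_c = (\vv^\top\vxi_c/\|\vv\|^2)\vv + \vxi_c^\perp$. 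Rotational symmetry of the isotropic distribution about the $\vv$-axis gives $\ee{\psi(r_c + \vv^\top\vxi_c)\,\vxi_c^\perp} = 0$, so only the $\bar\vx_c$- and $\vv$-directions survive. The two surviving coefficients are one-dimensional integrals against the (fixed) marginal of the scalar $\vv^\top\vxi_c$; carrying the $\|\vv\|$ factors produced by the radial projection yields a coefficient $\theta_1(r_c)/\|\vv\|$ in front of $\bar\vx_c$ and $\theta_2(r_c)/\|\vv\|^3$ in front of $\vv$. Summing $\sum_c a_c J_c$ then reproduces Eqn.~(5). For the monotonicity addendum, $\theta_1$ is (up to a fixed positive normalisation) a convolution of $\psi$ with a nonnegative unit-mass kernel, namely the marginal density of the isotropic projection; convolution of a monotone sigmoid satisfying $\psi(-\infty)=0$, $\psi(+\infty)=1$ with such a kernel preserves both monotonicity and the two boundary values.

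The main obstacle I anticipate is the $\|\vv\|$-bookkeeping in front of $\theta_1$ and $\theta_2$: the decomposition of $\vxi_c$ naturally pulls out a $1/\|\vv\|^2$ from the radial projection, which has to combine precisely with the $\|\vv\|$-scaling of the one-dimensional integrals against the isotropic marginal to yield the clean powers $\|\vv\|^{-1}$ and $\|\vv\|^{-3}$ claimed in the statement. That is the step where the precise ``isotropic'' convention enters, and it will require the most careful bookkeeping; everything else is straightforward application of gradient flow, the orthogonality of embeddings, and spherical symmetry.
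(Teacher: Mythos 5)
Your proposal follows essentially the same route as the paper: split $\Delta_m$ over the mixture components using constancy of $g_{h_k}$ to pull out $a_c$, reduce (via orthogonality of embeddings and uniform attention) to $\eee{p}{\vx\,\psi(\vv^\top\vx+\xi)}$, and use the rotational symmetry of the isotropic noise to kill the component of the perturbation perpendicular to $\vv$. The paper packages this last symmetry step as Lemma~\ref{lemma:hyperplane}, with an explicit pairing $\vx'\mapsto\vx''$ between the two half-spaces and a coordinate change to obtain the $\|\vv\|^{-1}$ and $\|\vv\|^{-3}$ factors; you carry out the same argument inline, and your reading of $\theta_1$ as a convolution of $\psi$ with the (nonnegative, unit-mass) marginal density is exactly the paper's Eqn.~(\ref{eq:rho_1_def}).
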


\def\tvv{\tilde\vv}
\def\tvx{\tilde\vx}

\label{sec:nonlinear-dynamics-attns}
Appendix~\ref{sec:tentative-critical-point-analysis} presents critical point analysis. Here we focus on a simplified one when $\vv$ is constrained to be a unit vector, which leads to the following modified dynamics ($P^\perp_\vv \vv = 0$):
\begin{equation}
    \dot\vv = P^\perp_\vv \Delta_m = \sum_c a_c \theta_1(r_c) P^\perp_\vv\bar\vx_c = \sum_c a_c \theta_1(r_c) \|\bar\vx_c\| [\vmu_c - (\vv^\top\vmu_c) \vv]
\end{equation}
where $\vmu_c := \bar \vx_c / \|\bar \vx_c\|$. We consider when $\vv$ is aligned with one cluster $\bar\vx_c$ but far away from others, then $r_c \gg r_{c'}$ for $c' \neq c$ and $\theta_1(r_c) \gg \theta_1(r_{c'})$ since $\theta_1$ is monotonously increasing. Hence $\vmu_c$ dominates and let $\vmu:=\vmu_c$ for brevity. Similar to Eqn.~\ref{eq:linear-case-dyn}, we use close-form simplification of \ours{} to incorporate self-attention, which leads to (we use exp attention):
\begin{equation}
   \dot \vv \propto (\vmu - \vv) \circ \exp(\vv^2/2) \label{eq:nonlinear-dynamics-self-attn}
\end{equation}
Here we omit the scalar terms and study when $\vv$ is close to $\vmu$, in which $\vv^\top\vmu = 1 + O(\|\vmu-\vv\|_2^2)\approx 1$. 
It is clear that the critical point $\vv_* = \vmu$ does not change after adding the term $\exp(\vv^2/2)$. However, the convergence speed changes drastically. As shown in the following lemma, the convergence speed towards \emph{salient} component of $\vmu$ (i.e., component with large magnitude) is much faster than non-salient ones:
\begin{restatable}[Convergence speed of salient vs. non-salient components]{theorem}{convergencenonlinearitysa}
\label{thm:convergence-speed}
Let $\delta_j(t) := 1 - v_j(t)/\mu_j$ be the convergence metric for component $j$ ($\delta_j(t)=0$ means that the component $j$ converges). For nonlinear dynamics with attention (Eqn.~\ref{eq:nonlinear-dynamics-self-attn}), then 
\begin{equation}
\frac{\ln \delta_j(0) / \delta_j(t)}{\ln \delta_k(0) / \delta_k(t)} = \frac{e^{\mu^2_j/2}}{e^{\mu^2_k/2}}(1 + \Lambda_{jk}(t))
\end{equation}
Here $\Lambda_{jk}(t) = \lambda_{jk}(t) \cdot e^{\mu_k^2/2}\ln^{-1}(\delta_k(0)/\delta_k(t))$ where $|\lambda_{jk
}(t)| \le C_{jk}$ and $C_{jk}$ only depends on $\delta_j(0)$ and $\delta_k(0)$. So when $|\delta_k(t)| \ll |\delta_k(0)|\exp[-C_{jk}\exp(\mu_k^2)]$, we have $|\Lambda(t)| \ll 1$.
\end{restatable}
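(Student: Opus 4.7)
The plan is to integrate the component-wise ODE directly and read off the claimed identity. Since $(\vmu-\vv)\circ\exp(\vv^2/2)$ acts element-wise, the dynamics in Eqn.~\ref{eq:nonlinear-dynamics-self-attn} decouples across coordinates: writing the common positive factor hidden in ``$\propto$'' as $c(t)>0$, each $v_j$ evolves via $\dot v_j = c(t)(\mu_j-v_j)\exp(v_j^2/2)$. The substitution $\delta_j := 1 - v_j/\mu_j$ turns this into the separable equation $\dd\ln\delta_j/\dd t = -c(t)\exp(\tfrac{\mu_j^2}{2}(1-\delta_j)^2)$, which integrates to
\begin{equation*}
L_j(t) := \ln\frac{\delta_j(0)}{\delta_j(t)} \;=\; \int_0^t c(s)\exp\!\left(\tfrac{\mu_j^2}{2}(1-\delta_j(s))^2\right)\dd s.
\end{equation*}

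Next I would extract the asymptotic leading exponential. Expanding $(1-\delta_j)^2 = 1 - (2\delta_j-\delta_j^2)$ pulls out the factor $e^{\mu_j^2/2}$, giving $L_j(t) = e^{\mu_j^2/2}\bigl(\tau(t)+R_j(t)\bigr)$, where $\tau(t):=\int_0^t c(s)\,\dd s$ is common to every coordinate and $R_j(t) := \int_0^t c(s)\bigl[\exp(-\tfrac{\mu_j^2}{2}(2\delta_j(s)-\delta_j^2(s))) - 1\bigr]\dd s$ is a coordinate-specific correction. Forming $L_j(t)/L_k(t)$, the shared $\tau$ almost cancels; using $\tau(t)+R_k(t) = L_k(t)/e^{\mu_k^2/2}$ one gets, after a one-line manipulation,
\begin{equation*}
\frac{L_j(t)}{L_k(t)} \;=\; \frac{e^{\mu_j^2/2}}{e^{\mu_k^2/2}}\!\left(1 + \frac{(R_j(t)-R_k(t))\,e^{\mu_k^2/2}}{L_k(t)}\right),
\end{equation*}
which matches the claimed form exactly under the identification $\lambda_{jk}(t) := R_j(t) - R_k(t)$.

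The main obstacle is the uniform bound $|\lambda_{jk}(t)| \le C_{jk}$. Reparametrizing time by $\tau$ (equivalently setting $c\equiv 1$), the ODE gives $\dd\ln\delta_j/\dd\tau = -\exp(\tfrac{\mu_j^2}{2}(1-\delta_j)^2) \le -1$, so every trajectory decays at least as fast as $\delta_j(\tau) \le \delta_j(0)e^{-\tau}$. Combining the elementary inequality $|e^{-y}-1|\le |y|e^{|y|}$ with the pointwise estimate $|y|=\tfrac{\mu_j^2}{2}(2\delta_j-\delta_j^2) \le \mu_j^2 \delta_j(\tau)$ shows that the integrand of $R_j$ decays exponentially in $\tau$, so $R_j$ converges to a finite limit admitting a bound of the form $|R_j(t)| \le e^{\mu_j^2}\mu_j^2\delta_j(0)$. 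The triangle inequality then yields $|\lambda_{jk}(t)|\le C_{jk}$ with $C_{jk} := e^{\mu_j^2}\mu_j^2\delta_j(0) + e^{\mu_k^2}\mu_k^2\delta_k(0)$, depending only on the initial data (with $\mu_j,\mu_k$ absorbed into the $jk$ subscript, as in the theorem statement). The closing qualitative claim is then arithmetic: substituting the hypothesis $|\delta_k(t)|\ll|\delta_k(0)|\exp[-C_{jk}\exp(\mu_k^2)]$ into $|\Lambda_{jk}(t)| \le C_{jk}e^{\mu_k^2/2}/L_k(t)$ immediately gives $|\Lambda_{jk}(t)|\ll 1$. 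The delicate point is really the exponential-decay estimate in $\tau$-time, which is exactly where the nonlinear attention factor $\exp(v^2/2)$ is essential; without it, the integral defining $R_j$ need not converge and no such $C_{jk}$ would exist.
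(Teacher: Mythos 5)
Your proof is correct and takes a genuinely different route from the paper. You integrate the ODE directly in time, pull out the common scalar factor $c(t)$ by reparametrizing to $\tau=\int_0^t c(s)\,\mathrm{d}s$, and control the correction $R_j$ via the uniform decay $\delta_j(\tau)\le\delta_j(0)e^{-\tau}$ (a consequence of $\exp(\tfrac{\mu_j^2}{2}(1-\delta_j)^2)\ge 1$). The paper instead eliminates time entirely: it forms the first integral $F(r,r_0,\mu)=\int_{r_0}^{r}\frac{e^{-\mu^2x^2/2}}{1-x}\,\mathrm{d}x$, uses $F(r_j(t),r_j(0),\mu_j)=F(r_k(t),r_k(0),\mu_k)$, differentiates $F$ in $\mu$ and applies the mean-value theorem together with a bound on the Gaussian integral $e^{-\mu^2/2}\int_0^\mu e^{x^2/2}\,\mathrm{d}x\le 1$ to conclude $\bigl|F(r,r_0,\mu)-e^{-\mu^2/2}\ln\frac{1-r_0}{1-r}\bigr|\le M(r_0)$. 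One can check that your $\lambda_{jk}=R_j-R_k$ equals $e^{-\mu_j^2/2}L_j - e^{-\mu_k^2/2}L_k$, which is exactly the paper's $\lambda_{jk}$, so the two constructions coincide — only the method of bounding differs.

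Two small caveats worth flagging. First, your $C_{jk}=e^{\mu_j^2}\mu_j^2\delta_j(0)+e^{\mu_k^2}\mu_k^2\delta_k(0)$ depends explicitly on $\mu_j,\mu_k$, whereas the theorem asserts (and the paper's $M(r_j(0))+M(r_k(0))$ achieves) a bound depending only on $\delta_j(0),\delta_k(0)$. In the setting of the theorem, $\vmu$ is a unit vector, so $|\mu_j|\le 1$ and your factor $e^{\mu_j^2}\mu_j^2\le e$ can be absorbed; indeed in that regime your bound is actually tighter, vanishing as $\delta_j(0),\delta_k(0)\to 0$ while the paper's stays above $2\sqrt{2\pi}$. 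But you should say so explicitly rather than waving it into the subscript. Second, your closing remark that ``without the attention factor the integral defining $R_j$ need not converge'' is backwards: without attention the dynamics is $\dot v_j \propto \mu_j-v_j$, $\delta_j$ still decays exponentially in $\tau$-time, and in fact $R_j\equiv 0$. The role of the $\exp(\vv^2/2)$ factor is not to make $R_j$ converge but to split the rates by $e^{\mu_j^2/2}$ in the first place — the whole point of the theorem.
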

\textbf{Remarks.} For linear attention, the ratio is different but the derivation is similar and simpler. Note that the convergence speed heavily depends on the magnitude of $\mu_j$. If $\mu_j>\mu_k$, then $\delta_j(t) \ll \delta_k(t)$ and $v_j(t)$ converges much faster than $v_k(t)$. Therefore, the salient (i.e., large) components is learned first, and the non-salient (i.e., small) component is learned later, due to the modulation of the extra term $\exp(\vv^2/2)$ thanks to self-attention, as demonstrated in Fig.~\ref{fig:nonlinear-dynamics-self-attn}.  

A follow-up question arises: What is the intuition behind salient and non-salient components in $\vmu$? Note that $\vmu$ is an $\ell_2$-normalized version of the conditional token frequency $\vx$, given the query $q=m$. In this case, similar to Theorem~\ref{thm:linear-dynamics} (and~\cite{tian2023scan}), we again see that if a contextual token $l$ co-occurs a lot with the query $m$, then the corresponding component $\mu_l$ becomes larger and the growth speed of $v_l$ towards $\mu_l$ is much faster.

\begin{figure}
\shrink{-0.3in}
\includegraphics[width=\textwidth]{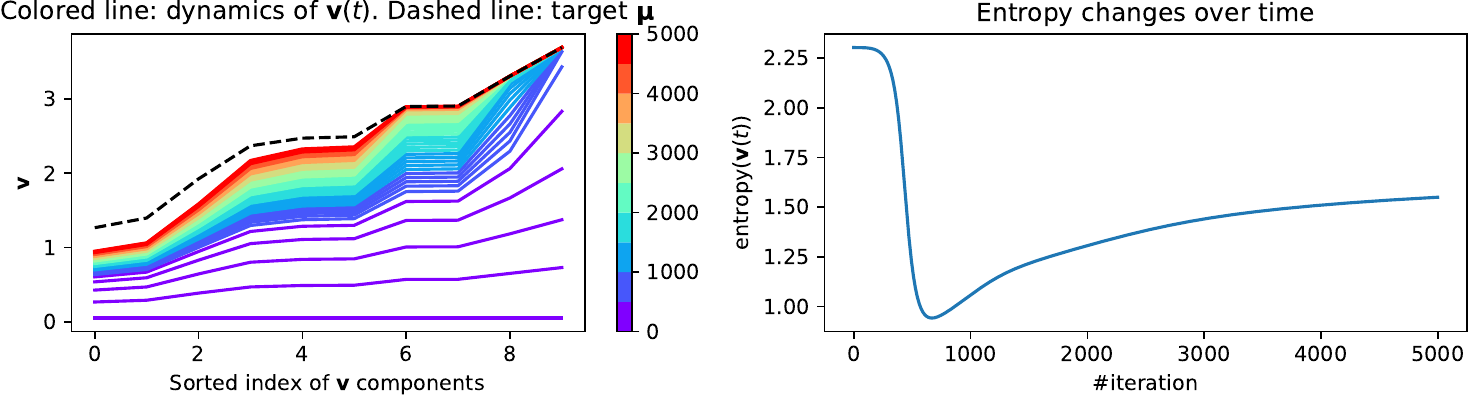}
\shrink{-0.3in}
\caption{\small Dynamics of nonlinear MLP with self-attention components included (Eqn.~\ref{eq:nonlinear-dynamics-self-attn}). \textbf{Left:} Training dynamics (color indicating training steps). The salient components (i.e., components with large magnitude in $\vmu$) of $\vv(t)$ are learned first, followed by non-salient ones. \textbf{Right:} Entropy of the attention (i.e., $\mathrm{entropy}(\mathrm{softmax}(\vv^2))$) drops when salient components are learned first, and then rebounces when other components catch up.}
\label{fig:nonlinear-dynamics-self-attn}
\end{figure}

\textbf{Relationship with rank of MLP lower layer}. Since MLP and attention layer has joint dynamics (Theorem~\ref{thm:joma}), this also suggests that in the MLP layer, the rank of lower layer matrix $W$ (which projects into the hidden nodes) will first  drop since the weight components that correspond to high target value $\mu_j$ grow first, and then bounce back to higher rank when the components that correspond to low target value $\mu_j$ catch up later. 

\def\hblt{\texttt{HBLT}}
\shrink{-0.1in}
\section{How self-attention learns hierarchical data distribution?}
\shrink{-0.1in}
\label{sec:hierarchical-dynamics}
A critical difference between the training dynamics of linear and nonlinear MLP is that in the nonlinear case, although slowly, the non-salient components will still grow, and the entropy of the attention bounces back later. While for 1-layer Transformer, this may only slow the training with no clear benefits, the importance of such a behavior is manifested if we think about the dynamics of multiple Transformer layers trained on a data distribution generated in a hierarchical manner.  

Consider a simple generative hierarchical binary latent tree model (\hblt)~\citep{tian2020understanding} (Fig.~\ref{eq:nonlinear-dynamics-self-attn}(a)) in which we have latent (unobservable) binary variables $y$ at layer $s$ that generate latents at layer $s-1$, until the observable tokens are generated at the lowest level ($s=0$). The topmost layer is the class label $y_0$, which can take $D$ discrete values. In \hblt, the generation process of $y_\beta$ at layer $s-1$ given $y_\alpha$ at layer $s$ can be characterized by their conditional probability $\pr[y_\beta=1| y_\alpha=1] = \pr[y_\beta=0| y_\alpha=0] = \frac12(1+\rho)$. The \emph{uncertainty} hyperparameter $\rho \in [-1,1]$ determines how much the top level latents can determine the values of the low level ones. Please check Appendix~\ref{sec:hierarchical-latent-tree} for its formal definition. 

With \hblt, we can compute the co-occurrence frequency of two tokens $l$ and $m$, as a function of the depth of their common latent ancestor (CLA): 
\begin{restatable}[Token Co-occurrence in $\hblt{}(\rho)$]{theorem}{hiercla}
If token $l$ and $m$ have common latent ancestor (CLA) of depth $H$ (Fig.~\ref{fig:hier-attn}(c)), then 
$\pr[y_l=1|y_m=1] = \frac12 \left(\frac{1+ \rho^{2H} - 2\rho^{L-1} \rho_0}{1 - \rho^{L-1}\rho_0}\right)$, where $L$ is the total depth of the hierarchy and $\rho_0 := \vp_{\cdot|0}^\top \vp_0$, in which $\vp_0 = [\pr[y_0 = k]]\in\rr^D$ and $\vp_{\cdot|0} := [\pr[y_l=0|y_0 = k]]\in\rr^D$, where $\{y_l\}$ are the immediate children of the root node $y_0$. 
\end{restatable}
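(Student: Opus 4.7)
The proof reduces to a routine calculation once the tree Markov structure is exploited, anchored on a single chain lemma. First I would establish the \emph{binary chain lemma}: for any directed path of binary nodes of length $d$ where each edge has symmetric flip probability $(1-\rho)/2$, the endpoints agree with probability $(1+\rho^d)/2$. This follows either by a one-line induction on $d$, or from the observation that the parity $y_d \oplus y_0$ is the mod-$2$ sum of $d$ i.i.d.\ Bernoulli$((1-\rho)/2)$ variables; composition of symmetric binary channels gives the geometric decay of the correlation coefficient as $\rho^d$. This is the only non-trivial combinatorial ingredient.

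The main computation then proceeds in two conditioning steps. Let $y_c$ denote the CLA of $y_l$ and $y_m$, and assume $H<L$ so $y_c$ is a binary node. By the tree Markov property, $y_l \perp y_m \mid y_c$, and by the chain lemma, $\pr[y_l = 1 \mid y_c = z] = (1+\rho^H)/2$ if $z=1$ and $(1-\rho^H)/2$ if $z=0$, with the same for $y_m$. Writing $p := \pr[y_c = 1]$, a straightforward expansion yields
\begin{equation}
\pr[y_l = 1, y_m = 1] = \tfrac{1}{4}\bigl(1 + \rho^{2H} + 2\rho^H(2p-1)\bigr), \qquad \pr[y_m = 1] = \tfrac{1}{2}\bigl(1 + \rho^H(2p-1)\bigr).
\end{equation}
To evaluate $p$, I condition on the root $y_0$. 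Let $y_w$ be the unique immediate child of $y_0$ on the path to $y_c$; since the CLA lies strictly below the root, this same $y_w$ also lies on the path to $y_m$. The chain from $y_w$ to $y_c$ has length $L-1-H$, so another application of the chain lemma together with $q_k := \pr[y_w = 0 \mid y_0 = k]$ gives $\pr[y_c = 1 \mid y_0 = k] = \tfrac{1}{2}(1 + \rho^{L-1-H}(1 - 2q_k))$. Averaging over $y_0$ produces $2p - 1 = \rho^{L-1-H}(1 - 2\bar q)$ with $\bar q := \sum_k \pr[y_0=k]\, q_k$. Substituting back, the factor $\rho^H(2p-1)$ collapses to $\rho^{L-1}(1 - 2\bar q)$, identifying this quantity (up to sign) with $\rho_0$ via the inner-product definition $\rho_0 = \vp_{\cdot|0}^\top \vp_0$. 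Taking the ratio $\pr[y_l = 1, y_m = 1] / \pr[y_m = 1]$ then yields the stated formula.

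\textbf{Where the difficulty actually sits.} The algebra itself is cheap; the only real subtlety is the bookkeeping around the multi-valued root and the convention for $\rho_0$. Specifically, the natural correlation-like quantity emerging from the calculation is $1 - 2\bar q$, i.e., the signed difference between $\pr[y_w=0]$ and $\pr[y_w=1]$, whereas $\vp_{\cdot|0}^\top \vp_0$ literally computes $\pr[y_w=0]$. I would therefore spend care at this point, possibly reinterpreting $\vp_{\cdot|0}$ as a centered/signed vector (or equivalently absorbing a constant shift) so that the sign and normalization in the theorem's expression match the derivation. I would also verify the formula at the degenerate cases $H=0$ (which must give $1$), $\rho \to 0$ (which must give $1/2$), and $\rho \to \pm 1$ (which must give $1$), all of which collapse cleanly regardless of $\rho_0$, as a sanity check on the sign conventions.
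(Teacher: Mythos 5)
Your proposal is correct and takes essentially the paper's approach, with a change of bookkeeping rather than of idea. Your ``binary chain lemma'' is precisely the paper's algebraic identity $M(\rho_1)M(\rho_2)=M(\rho_1\rho_2)$ for the symmetric $2\times 2$ transition matrix $M(\rho)=\tfrac12(\vone\vone^\top + \rho\ve\ve^\top)$; the paper phrases the geometric decay of correlation as the isomorphism between matrix products and scalar products, while you phrase it as the agreement probability $(1+\rho^d)/2$ along a length-$d$ chain. Both then condition on the CLA $y_c$ (using the tree Markov property) to obtain the joint $\pr[y_l=1,y_m=1]$ and the marginal $\pr[y_m=1]$, and finally condition on $y_0$ to eliminate $\pr[y_c=1]$. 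The paper's matrix algebra is slightly more mechanical (the cross-term $\rho^H\rho_c$ drops out of the rank-one decomposition of $P_{\beta_1\beta_2}$), but your scalar expansion yields the same expression.

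Your flagged subtlety is real and deserves credit: the theorem statement defines $\vp_{\cdot|0} := [\pr[y_l=0\mid y_0=k]]$, which makes $\rho_0=\vp_{\cdot|0}^\top\vp_0 = \pr[y_w=0]$, whereas the paper's proof in fact parameterizes the $2\times D$ matrix $P_{y_1|y_0}$ via the \emph{centered} quantities $p_{k|0}=2\pr[y_w=0\mid y_0=k]-1$, so that the $\rho_0$ entering the final formula is $2\pr[y_w=0]-1$. Your derivation, arriving at $1-2\bar q = -(2\bar q - 1)$, matches the proof's convention, not the theorem statement's literal one. Your sanity checks at $H=0$, $\rho\to 0$, and $\rho\to\pm 1$ are also right, and you correctly note that they are blind to the $\rho_0$ convention since the $\rho_0$-dependence cancels in all three limits, so they would not have caught the mismatch on their own.
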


\textbf{Remarks.} If $y_0$ takes multiple values (many classes) and each class only trigger one specific latent binary variables, then most of the top layer latents are very sparsely triggered and thus $\rho_0$ is very close to $1$. If $\rho$ is also close to $1$, then for deep hierarchy and shallow common ancestor, $\pr[y_l = 1|y_m = 1] \rightarrow 1$. To see this, assume $\rho = \rho_0 = 1-\epsilon$, then we have:
\begin{eqnarray}
    \pr[y_l = 1|y_m = 1] = \frac12 \left[\frac{1 + 1 - 2H\epsilon - 2(1-L\epsilon)}{1 - (1 - L\epsilon)}\right] + O(\epsilon^2)= 1 - \frac{H}{L} + O(\epsilon^2)
\end{eqnarray}
This means that two tokens $l$ and $m$ co-occur a lot, if they have a shallow CLA ($H$ small) that is close to both tokens. If their CLA is high in the hierarchy (e.g., $l'$ and $m$), then the token $l'$ and $m$ have much weaker co-occurrence and $\pr(l'|m)$ (and thus $x_{l'}$ and $\mu_{l'}$) is small. 

\begin{figure}
    \centering
    \shrink{-0.2in}
    \includegraphics[width=.95\textwidth]{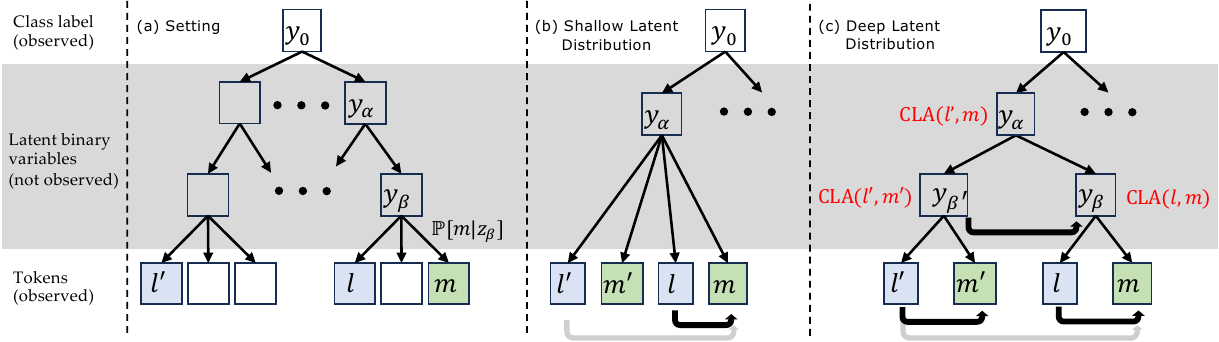}
    \shrink{-0.1in}
    \caption{\small \textbf{(a)} Hierarchical binary tree generative models. Except for $y_0$ that is the observable label of a sequence and can take $D$ discrete labels, all latent variables follow binomial distribution. A binary leaf variable $y_l = 1$ indicates that token $l$ appears in the sequence. \textbf{(b)} Attention dynamics in multi-layer setting. There is a strong co-occurrence between the query $m$ and the token $l$, but a weak co-occurrence between $m$ and $l'$. As a result, $m$ associates with $l$ first, and eventually associates with $l'$, even if they co-occur weakly, according to Theorem~\ref{thm:convergence-speed}. \textbf{(c)} If there exists an additional layer $y_\beta$ and $y_{\beta'}$ in the latent hierarchy, the association $m$-$l$ and $m'$-$l'$ will be learned first due to their high co-occurrence. Once the lower hierarchy gets learned and some hidden nodes in MLP represents $y_\beta$ and $y_{\beta'}$ (see Sec.~\ref{sec:val-align} for experimental validation), on the next level, $y_\beta$ and $y_{\beta'}$ shows strong co-occurrence and gets picked up by the self-attention mechanism to form even higher level features. In contrast, the association of $l'$-$m$ is much slower and does not affect latent hierarchy learning, showing that self-attention mechanism is adaptive to the structure of data distribution.}
    \label{fig:hier-attn}
\end{figure}

With this generative model, we can analyze qualitatively the learning dynamics of \ours: first it focuses on associating the tokens in the same lowest hierarchy as the query $m$ (and these tokens co-occur a lot with $m$), then gradually reaches out to other tokens $l'$ that co-occur less with $m$, if they \textbf{have not been picked up} by other tokens (Fig.~\ref{fig:hier-attn}(b)); if $l'$ co-occurs a lot with some other $m'$, then $m$-$l$ and $m'$-$l'$ form their own lower hierarchy, respectively. This leads to learning of high-level features $y_\beta$ and $y_{\beta'}$, which has high correlation are associated in the higher level. Therefore, the latent hierarchy is implicitly learned.  

\shrink{-0.1in}
\section{Experiments}
\shrink{-0.1in}

\textbf{Dynamics of Attention Sparsity}. Fig.~\ref{fig:attn-dyn-multiple-layer} shows how attention sparsity changes over time when training from scratch. We use $10^{-4}$ learning rate and test our hypothesis on Wikitext2/Wikitext103~\citep{merity2016pointer} (top/bottom row). Fig.~\ref{fig:attn-lr} further shows that different learning rate leads to different attention sparsity patterns. With large learning rate, attention becomes extremely sparse as in~\citep{tian2023scan}. Interestingly, the attention patterns, which coincide with our theoretical analysis, yield the best validation score.   

We also tested our hypothesis in OPT~\citep{zhang2022opt} (OPT-2.7B) and Pythia~\citep{biderman2023pythia} (Pythia-70M/1.4B/6.9B) pre-trained models, both of which has public intermediate checkpoints. While the attention patterns show less salient drop-and-bounce patterns, the dynamics of stable ranks of the MLP lower layer (projection into hidden neurons) show much salient such structures for top layers, and dropping curves for bottom layers since they are suppressed by top-level learning (Sec.~\ref{sec:hierarchical-dynamics}). Note that stable ranks only depend on the model parameters and thus may be more reliable than  attention sparsity. 

\begin{figure}
    \includegraphics[width=0.95\textwidth]{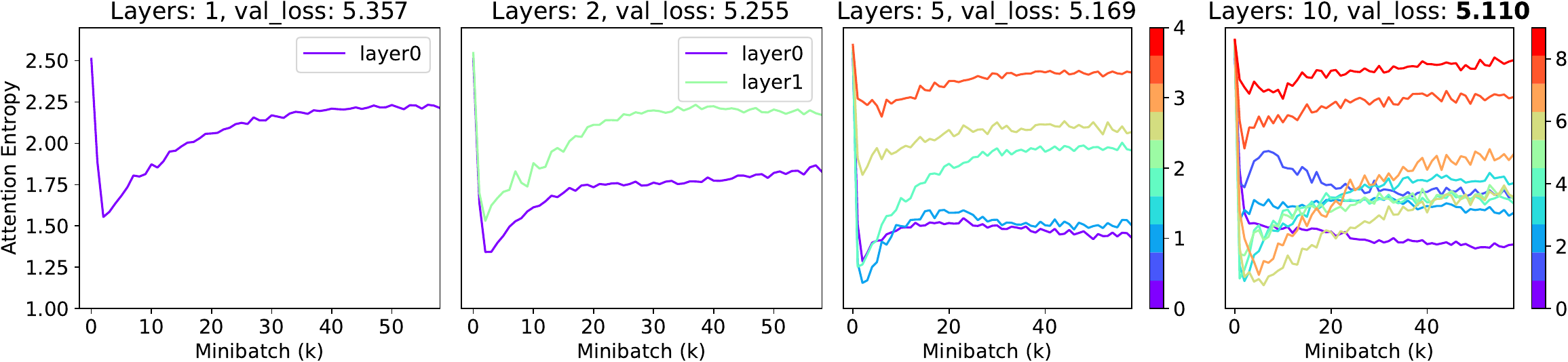}
    \includegraphics[width=0.95\textwidth]{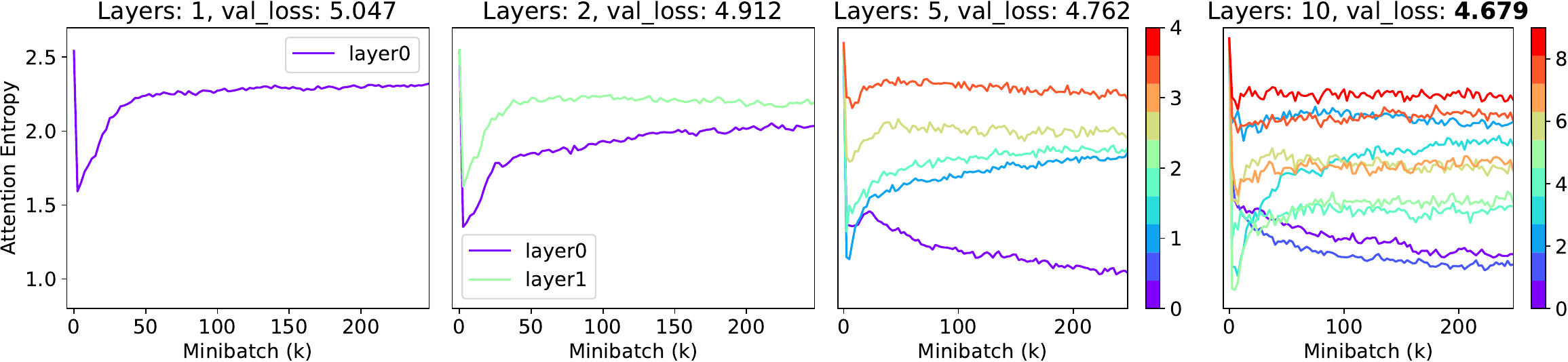}
    \shrink{-0.15in}
    \caption{\small Dynamics of attention sparsity. In 1-layer setting, The curves bear strong resemblance to our theoretical prediction (Fig.~\ref{fig:nonlinear-dynamics-self-attn}); in multi-layer settings, the attention entropy in top Transformer layers has a similar shape, while the entropy in bottom layers are suppressed due to layer interactions (Sec.~\ref{fig:nonlinear-dynamics-self-attn}). \textbf{Top row:} Wikitext2, \textbf{Bottom row:} Wikitext103.}
    \label{fig:attn-dyn-multiple-layer}
\end{figure}

\begin{figure}
    \includegraphics[width=0.95\textwidth]{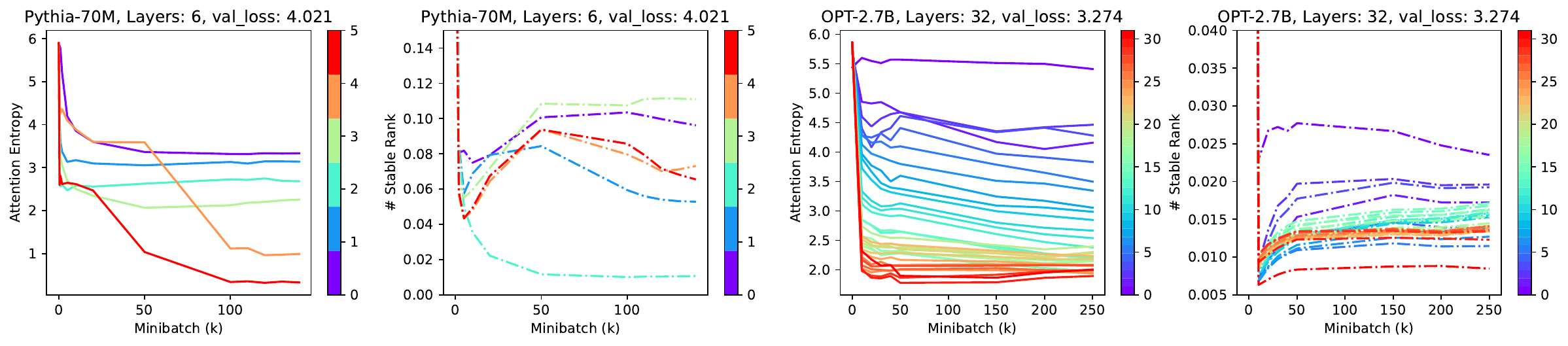}
    \includegraphics[width=0.95\textwidth]{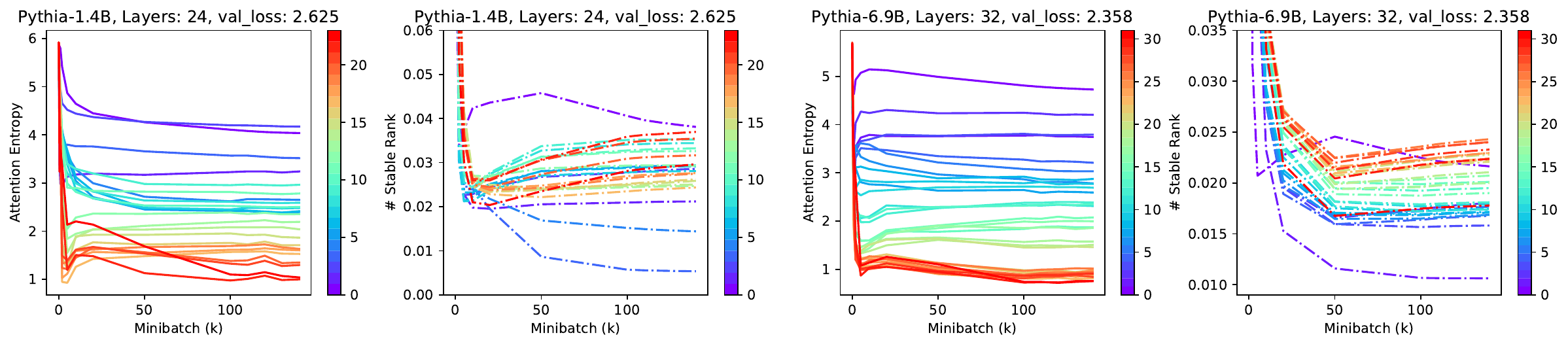}
    \shrink{-0.15in}
    \caption{\small Dynamics of attention sparsity and stable rank in OPT-2.7B and Pythia-70M/1.4B/6.9B. Results are evaluated on Wikitext103~\citep{merity2016pointer}.}
    \label{fig:attn-dyn-low-rank-opt-pythia}
\end{figure}

\begin{figure}
    \centering
    \includegraphics[width=0.95\textwidth]{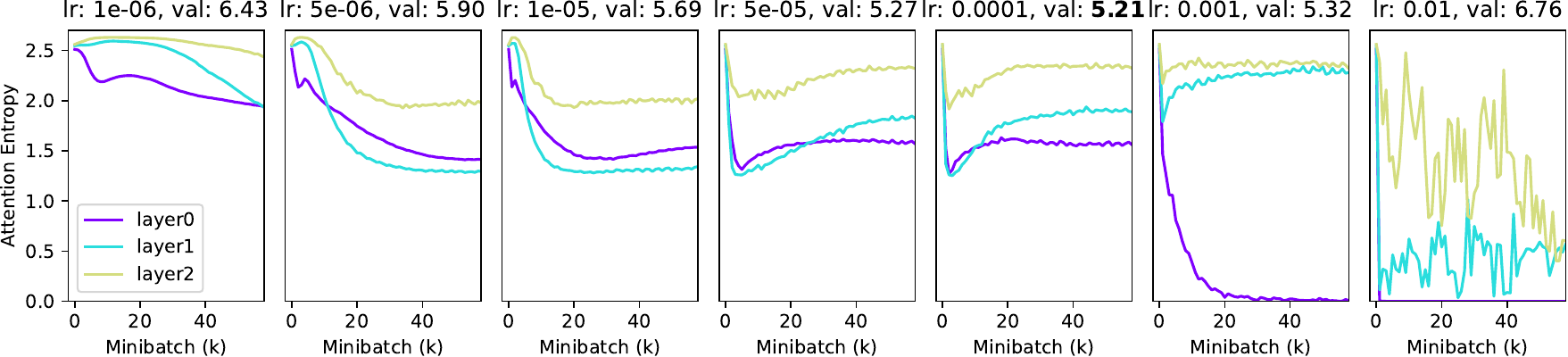}
    \includegraphics[width=0.95\textwidth]{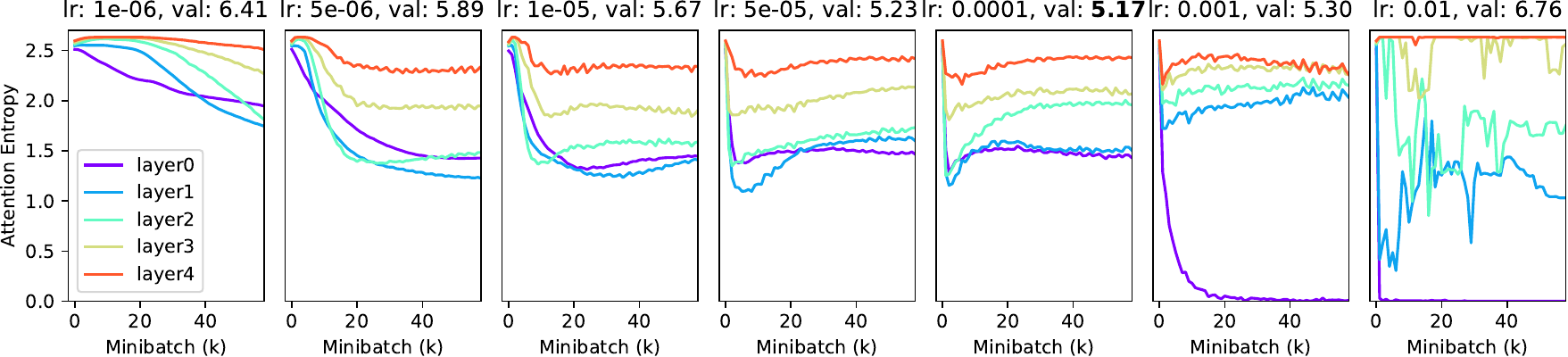}
    \shrink{-0.1in}
    \caption{\small Effect of different learning rates on attention sparsity. Different learning rates lead to different dynamics of attention sparsity, and the attention patterns consistent with our theoretical analysis (Fig.~\ref{fig:nonlinear-dynamics-self-attn}) give the lowest validation losses.}
    \label{fig:attn-lr}
\end{figure}


\textbf{Validation of Alignment between latents and hidden nodes in MLP}.
\label{sec:val-align}
Sec.~\ref{sec:hierarchical-dynamics} is based on an assumption that the hidden nodes in MLP layer will learn the latent variables. We verify this assumption in synthetic data sampled by \hblt{}, which generate latent variables in a top-down manner, until the final tokens are generated. The latent hierarchy has 2 hyperparameters: number of latents per layer ($N_s$) and number of children per latent ($N_{\mathrm{ch}}$). $C$ is the number of classes. Adam optimizer is used with learning rate $10^{-5}$. Vocabulary size $M=100$, sequence length $T=30$ and embedding dimension $d= 1024$.

We use 3-layer generative model as well as 3-layer Transformer models. We indeed perceive high correlations between the latents and the hidden neurons between corresponding layers. Note that latents are known during input generation procedure but are not known to the transformer being trained. We take the maximal activation of each neuron across the sequence length, and compute normalized correlation between maximal activation of each neuron and latents, after centeralizing across the sample dimension.   
Tbl.~\ref{tab:alignment} shows that indeed in the learned models, for each latent, there exists at least one hidden node in MLP that has high normalized correlation with it, in particular in the lowest layer. When the generative models becomes more complicated (i.e., both $N_\mathrm{ch}$ and $N_l$ become larger), the correlation goes down a bit.

\begin{table}
\small
\centering
\shrink{-0.25in}
\begin{tabular}{l|l|l|l|l|l|l|}
\toprule
 & \multicolumn{2}{c|}{$C=20$, $N_{\mathrm{ch}}=2$} & \multicolumn{2}{c|}{$C=20$, $N_{\mathrm{ch}}=3$} & \multicolumn{2}{c|}{$C=30$, $N_{\mathrm{ch}}=2$} \\ 
\cline{1-7}
$(N_0,N_1)$ & (10, 20) & (20, 30) & (10, 20) & (20, 30) & (10, 20) & (20, 30) \\ 
\midrule
\midrule
NCorr ($s=0$) & $0.99\pm 0.01$ & $0.97\pm 0.02$ & $1.00\pm 0.00$ & $0.96\pm 0.02$ & $0.99\pm 0.01$ & $0.94\pm 0.04$ \\
NCorr ($s=1$) & $0.81\pm 0.05$ & $0.80\pm 0.05$ & $0.69\pm 0.05$ & $0.68\pm 0.04$ & $0.73\pm 0.08$ & $0.74\pm 0.03$ \\
\hline
& \multicolumn{2}{c}{$C=30$ $N_{\mathrm{ch}}=3$} &
\multicolumn{2}{c|}{$C=50$, $N_{\mathrm{ch}}=2$} & \multicolumn{2}{c|}{$C=50$, $N_{\mathrm{ch}}=3$} \\
\cline{1-7}
$(N_0,N_1)$ & (10, 20) & (20, 30)
& (10, 20) & (20, 30) & (10, 20) & (20, 30) \\
\midrule
NCorr ($s=0$) & $0.99\pm 0.01$ & $0.95\pm 0.03$ & $0.99\pm 0.01$ & $0.95\pm 0.03$ & $0.99\pm 0.01$ & $0.95\pm 0.03$ \\
NCorr ($s=1$) & $0.72\pm 0.04$ & $0.66\pm 0.02$ & $0.58\pm 0.02$ & $0.55\pm 0.01$ & $0.64\pm 0.02$ & $0.61\pm 0.04$\\
\bottomrule
\end{tabular}
\shrink{-0.1in}
\caption{\small Normalized correlation between the latents and their best matched hidden node in MLP of the same layer. All experiments are run with 5 random seeds.}
\shrink{-0.1in}
\label{tab:alignment}
\end{table}

\shrink{-0.14in}
\section{Discussion}
\shrink{-0.14in}
\textbf{Deal with almost orthogonal embeddings}. In this paper, we focus on \emph{fixed} orthonormal embeddings vectors. However, in real-world Transformer training, the assumption may not be valid, since often the embedding dimension $d$ is smaller than the number of vocabulary $M$ so the embedding vectors cannot be orthogonal to each other. In this setting, one reasonable assumption is that the embedding vectors are \emph{almost} orthogonal. Thanks to Johnson–Lindenstrauss lemma, one interesting property of high-dimensional space is that for $M$ embedding vectors to achieve almost orthogonality $|\vu_l^\top\vu_{l'}|\le \epsilon$, only $d \ge 8\epsilon^{-2}\log M$ is needed. As a result, our \ours{} framework (Theorem~\ref{thm:joma}) will have additional $\epsilon$-related terms and we leave the detailed analysis as one of our future work. 

\textbf{Training embedding vectors}. Another factor that is not considered in \ours{} is that the embedding vectors are also trained simultaneously. This could further boost the efficiency of Transformer architecture, since concepts with similar semantics will learn similar embeddings. This essentially reduces the vocabulary size at each layer for learning to be more effective, and leads to better generalization. For example, in each hidden layer $4d$ hidden neurons are computed, which does not mean there are $4d$ independent intermediate ``tokens'', because many of their embeddings are highly correlated.   

\textbf{Self-attention computed from embedding}. \ours{} arrives at the joint dynamics of MLP and attention by assuming that the pairwise attention score $Z$ is an independent parameters optimized under SGD dynamics. In practice, $Z = UW_QW^\top_K U^\top$ is also parameterized by the embedding matrix, which allow generalization to tokens with similar embeddings, and may accelerate the training dynamics of $Z$. We leave it in the future works. 

\shrink{-0.14in}
\section{Conclusion}
\shrink{-0.14in}
We propose \ours{}, a framework that characterizes the joint training dynamics of nonlinear MLP and attention layer, by integrating out the self-attention logits. The resulting dynamics connects the dynamics of nonlinear MLP lower layer weights (projection into hidden neurons) and self-attention, and shows that the attention first becomes sparse (or weights becomes low rank) and then becomes dense (or weights becomes high rank). Furthermore, we qualitatively give a learning mechanism of multilayer Transformer that reveals how self-attentions at different layers interact with each other to learn the latent feature hierarchy.  

\subsubsection*{Acknowledgments}
Simon S.~Du is supported by supported by NSF IIS 2110170,
NSF DMS 2134106, NSF CCF 2212261, NSF IIS 2143493, NSF CCF 2019844, NSF IIS 2229881.

\bibliography{references}
\bibliographystyle{iclr2024_conference}

\clearpage
\appendix

\section{Proofs}
\subsection{Per-hidden loss formulation}
\label{sec:per-hidden-loss}
Our Assumption~\ref{assumption:backprop-grad} has an equivalent per-hidden node loss:  
\begin{equation}
    \max_{\{\vw_k\}, \{\vz_m\}} \eee{\cD}{\sum_k g_{h_k}h_k} := \max_{\{\vw_k\}, \{\vz_m\}}  \eee{i\sim\cD}{\sum_k g_{h_k}[i] h_k[i]} \label{eq:objective}
\end{equation}
where $g_{h_k}[i]$ is the backpropagated gradient sent to node $h_k$ at sample $i$. 

\subsection{\ours{} framework (Section ~\ref{sec:joma})}
\jomabasic*
\begin{proof}
Let $L := \partial \vb / \partial \vz_m$. Plugging the dynamics of $\vw_k$ into the dynamics of self-attention logits $\vz_m$, we have:
\begin{equation}
    \dot \vz_m = \eee{q=m}{L^\top U_C^\top\sum_k g_{h_k} h'_k \vw_k} = \sum_k \eee{q=m}{g_{h_k} h'_k L^\top \vv_k} 
\end{equation}
Before we start, we first define $\xi_k(t) := \int_0^t \eee{q=m}{g_{h_k}(t')h_k'(t')}\dd t'$. Therefore, $\dot\xi_k = \eee{q=m}{g_{h_k}h_k'}$. Intuitively, $\xi_k$ is the bias of node $k$, regardless of whether there exists an actual bias parameter to optimize. 

Notice that $U_C^\top\vf = \vb + U_C^\top \vu_q$, with orthonormal condition between contextual and query tokens: $U_C^\top \vu_m = 0$, and thus $U_C^\top\vf = \vb$, which leads to
\begin{equation}
    \dot\vv_k = U_C^\top \dot \vw_k = U_C^\top \eee{q=m}{g_{h_k}h_k'\vf} = \eee{q=m}{g_{h_k}h_k'\vb}
\end{equation}

\textbf{Unnormalized attention ($A:=\mathrm{const}$)}. In this case, we have $\vb = \sigma(\vz_m)\circ \vx/A$ and $L = \diag(\sigma'(\vz_m) \circ \vx)/A = \diag\left(\frac{\sigma'(\vz_m)}{\sigma(\vz_m)}\right) \diag(\vb)$ and thus
\begin{eqnarray}
    \dot \vz_m &=& \sum_k \eee{q=m}{g_{h_k} h'_k L^\top \vv_k} = \diag\left(\frac{\sigma'(\vz_m)}{\sigma(\vz_m)}\right) \sum_k \eee{q=m}{g_{h_k} h'_k \vb} \circ \vv_k \\
    &=& \diag\left(\frac{\sigma'(\vz_m)}{\sigma(\vz_m)}\right) \sum_k \dot \vv_k \circ \vv_k
\end{eqnarray}
which leads to
\begin{equation}
    \diag\left(\frac{\sigma(\vz_m)}{\sigma'(\vz_m)}\right)\dot \vz_m = \sum_k \dot \vv_k \circ \vv_k
\end{equation}
Therefore, for linear attention, $\sigma(\vz_m) / \sigma'(\vz_m) = \vz_m$, by integrating both sides, we have $\vz_m^2(t) = \sum_k \vv^2_k(t) + \vc$. For exp attention, $\sigma(\vz_m) / \sigma'(\vz_m) = 1$, then by integrating both sides, we have $\vz_m(t) = \frac12\sum_k \vv^2_k(t) + \vc$.

\textbf{Softmax attention}. In this case, we have $L = \diag(\vb) - \vb\vb^\top$. Therefore, 
\begin{equation}
    \eee{q=m}{g_{h_k} h'_k \diag(\vb)} U_C^\top \vw_k = \eee{q=m}{g_{h_k} h'_k \vb} \circ \vv_k = \dot\vv_k \circ \vv_k 
\end{equation}
where $\circ$ is the Hadamard (element-wise) product. Now 
Therefore, we have:
\begin{equation}
    \eee{q=m}{g_{h_k} h'_k \vb^\top}U_C^\top\vw_k = \dot\vv_k^\top \vv_k
\end{equation}

Given the assumption that $\vb$ is uncorrelated with $\sum_k g_{h_k}h'_k \vb$ (e.g., due to top-down gradient information), and let $\bar\vb_{m} = \eee{q=m}{\vb}$, we have:
\begin{equation}
    \dot\vz_m = \sum_k \dot\vv_k\circ \vv_k - \bar\vb_m \dot\vv_k^\top \vv_k
\end{equation}
If we further assume that $\bar\vb_m$ is constant over time, then we can integrate both side to get a close-form solution between $\vz_m(t)$ and $\{\vv_k(t)\}$:
\begin{equation}
    \vz_m(t) = \frac{1}{2}\sum_k \left(\vv^2_k - \|\vv_k\|_2^2 \bar\vb_m \right) + \vc \label{eq:first-integral}
\end{equation}
\end{proof}

\lineardynamicssa*
\begin{proof}
Due to the assumption, we have:
\begin{equation}
   \dot v_l = \eee{q=m}{g_{h_k} x_l} \exp(z_{ml}) / A = \Delta_{lm} \exp(z_{ml}) / A 
\end{equation}
where $\Delta_{lm} := \eee{q=m}{g_{h_k} x_l}$. If $x_l[i] = \pr(l|m,y[i])$, then $\Delta_{lm} = \eee{l,q=m}{g_{h_k}}\pr(l|m)$. Note that for linear model, $\Delta_{lm}$ is a constant over time. 

Plugging in the close-form solution for exp attention, the dynamics becomes 
\begin{equation}
    \dot v_l = \Delta_{lm} \exp(v_l^2/2 + c_l) / A
\end{equation}
Assuming $c_l = 0$, then for any two tokens $l \neq l'$, we get 
\begin{equation}
    \frac{\dot v_l}{\dot v_{l'}} = \frac{\Delta_{lm}\exp(z_{ml})}{\Delta_{l'm}\exp(z_{ml'})} = \frac{\Delta_{lm}\exp(v^2_l/2)}{\Delta_{l'm}\exp(v^2_{l'}/2)} \label{eq:ratio-eq}
\end{equation}
which can be integrated using $\erf(\cdot)$ function (i.e., Gaussian CRF: $\erf(x) = \frac{2}{\sqrt{\pi}}\int_0^x e^{-t^2}\dd t$): 
\begin{equation}
    \frac{\erf\left(v_l(t)/2\right)}{\Delta_{lm}} = \frac{\erf(v_{l'}(t)/2)}{\Delta_{l'm}}  + c_{ll'} \label{eq:w-integral-appendix}
\end{equation}
if $\vv(0) = 0$, then $c_{ll'} = 0$.
\end{proof}

\subsection{Dynamics of Nonlinear activations (Sec.~\ref{sec:joma-nonlinear})}
\subsubsection{Without self-attention (or equivalently, with uniform attention)}
\begin{restatable}[Expectation of Hyperplane function under Isotropic distribution]{lemma}{hyperplaneexp}
\label{lemma:hyperplane}
For any isotropic distribution $p(\vx - \bar\vx)$ with mean $\bar\vx$ in a subspace spanned by orthonormal bases $R$, if $\vv\neq \vzero$, we have: 
\begin{equation}
    \eee{p}{\vx\psi(\vv^\top \vx + \xi)} = \frac{\theta_1(r_{\vv})}{\|\vv\|_2} \bar\vx + \frac{\theta_2(r_{\vv})}{\|\vv\|^3_2}RR^\top \vv, \quad\quad \eee{p}{\psi(\vv^\top \vx+\xi)} = \frac{\theta_1(r_{\vv})}{\|\vv\|_2}
\end{equation}
where $r_{\vv} := \vv^\top\bar\vx + \xi$ is the (signed) distance between the distribution mean $\bar\vx$ and the affine hyperplane $(\vv, \xi)$. $\theta_1(r)$ and $\theta_2(r)$ only depends on $\psi$ and the underlying distribution but not $\vv$. Additionally, 
\begin{itemize}
    \item If $\psi(r)$ is monotonously increasing, then $\theta_1(r)$ is also monotonous increasing;  
    \item If $\psi(r) \ge 0$, then $\theta_1(r) \ge 0$;
    \item If $\psi(-\infty) = 0$, $\psi(+\infty)=1$, then $\theta_1(-\infty) = 0$ and $\theta_1(+\infty) = 1$;
    \item If $\psi(-\infty) = 0$, then $\theta_2(-\infty) = 0$.
\end{itemize}
\end{restatable}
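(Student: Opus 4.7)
The strategy is to exploit the rotational symmetry of the isotropic density and reduce the two expectations to one-dimensional integrals in the direction of $\vv$ within the subspace. First I substitute $\vy := \vx - \bar\vx$, so $\vy$ is zero-mean isotropic with support in $\mathrm{range}(R)$, and split $\vv = \tilde\vv + \vv^\perp$ with $\tilde\vv := RR^\top\vv$ and $\vv^\perp \perp \mathrm{range}(R)$. Since $\vy \in \mathrm{range}(R)$, $\vv^\top\vy = \tilde\vv^\top\vy$, so $\vv^\top\vx + \xi = \tilde\vv^\top\vy + r_\vv$. Writing $\eee{p}{\vx\,\psi(\vv^\top\vx+\xi)} = \bar\vx\,\eee{p}{\psi(\tilde\vv^\top\vy + r_\vv)} + \eee{p}{\vy\,\psi(\tilde\vv^\top\vy + r_\vv)}$ already yields the required $\bar\vx$-term and the orthogonal correction that still needs to be identified with a scalar multiple of $RR^\top\vv$.

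Next I reduce each piece to a one-dimensional integral. Let $\ve := \tilde\vv/\|\tilde\vv\|_2$ and decompose $\vy = y_1\ve + \vy_\perp$. The argument of $\psi$ collapses to $\|\tilde\vv\|_2 y_1 + r_\vv$, a function of $y_1$ alone. The scalar expectation $\eee{p}{\psi(\cdot)}$ therefore reduces to an integral against the marginal density of $y_1$. For the vector expectation $\eee{p}{\vy\,\psi(\cdot)}$, rotational invariance of $p$ under $\vy_\perp \mapsto -\vy_\perp$ kills the $\vy_\perp$-component, so the result is proportional to $\ve$, equivalently to $RR^\top\vv/\|\tilde\vv\|_2$. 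Collecting the scalar coefficients and absorbing the $\|\vv\|_2$-normalization into the denominators $\|\vv\|_2$ and $\|\vv\|_2^3$ defines $\theta_1(r_\vv)$ and $\theta_2(r_\vv)$, yielding the two claimed formulas.

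The qualitative properties then fall out by passing the relevant operations under the expectation. If $\psi$ is monotone increasing, so is $r \mapsto \psi(\|\tilde\vv\|_2 y_1 + r)$ for each $y_1$; hence $\theta_1$ is monotone increasing by differentiation under the integral. The bound $\psi \ge 0$ gives $\theta_1 \ge 0$, and the limits $\theta_1(\pm\infty) \in \{0,1\}$ follow from dominated convergence with the majorant $1$. The identity $\theta_2(-\infty)=0$ follows from dominated convergence applied to $y_1\,\psi(\|\tilde\vv\|_2 y_1 + r)$, using the integrability of $y_1$ under the isotropic marginal.

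The main obstacle is controlling the precise $\vv$-dependence of the one-dimensional integrals: they depend a priori on both $r_\vv$ and $\|\tilde\vv\|_2$. Presenting $\theta_1,\theta_2$ as functions of $r_\vv$ alone, as in the lemma statement, relies on the normalization convention in which the explicit $\|\vv\|_2$ denominators absorb the magnitude dependence. This is consistent with the regime used in Sec.~\ref{sec:nonlinear-dynamics-attns}, where $\vv$ lies in the contextual subspace so that $\|\vv\|_2 = \|\tilde\vv\|_2$; more generally, care is needed to track which scale of the isotropic distribution is held fixed so that the factorization into an $r_\vv$-only piece and a $\vv$-magnitude piece is legitimate.
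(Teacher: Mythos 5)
Your proof follows essentially the same route as the paper's: translate by $\bar\vx$, project onto $\mathrm{span}(R)$, use a reflection symmetry to kill the component of $\eee{p}{\vy\,\psi(\cdot)}$ orthogonal to $RR^\top\vv$, then reduce to a one-dimensional integral against the marginal density along $RR^\top\vv/\|RR^\top\vv\|_2$ and name the resulting scalar integrals $\theta_1,\theta_2$. The paper's symmetry step is phrased as pairing $\vx'$ with $\vx'' = \frac{2y'}{\|\vv\|^2}\vv - \vx'$ across a hyperplane through $\vv$, which is the same reflection $\vy_\perp \mapsto -\vy_\perp$ you use. The closing arguments for monotonicity, nonnegativity, and the $\pm\infty$ limits via dominated convergence are fine and match what the paper leaves implicit.

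The caveat you raise in your last paragraph is real, and in fact the paper's own derivation does not resolve it cleanly. After the substitution the one-dimensional integral is $\int \psi(\|\tilde\vv\|_2\, a + r_\vv)\, p_n(a)\, \mathrm{d}a$, where $a$ is the orthonormal coordinate along $\tilde\vv/\|\tilde\vv\|_2$ and $p_n$ is its marginal density (which is $\vv$-independent). The $\|\tilde\vv\|_2$ therefore sits \emph{inside} $\psi$, so $\theta_1, \theta_2$ as written do depend on $\|\tilde\vv\|_2$, not just on $r_\vv$. The paper's change of variables silently treats the dummy variable $y' := \vv^\top\vx'$ as if it were already the orthonormal coordinate $y'/\|\vv\|_2$ when plugging into $p_n$, which only agrees when $\|\tilde\vv\|_2 = 1$; this is consistent with the later use of the lemma under the unit-norm constraint on $\vv$ in Sec.~\ref{sec:nonlinear-dynamics-attns}, but it is not what the unrestricted lemma statement asserts. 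So your proposal is correct and complete up to the exact same gap present in the source; for full generality the lemma should either be stated with $\theta_1,\theta_2$ depending on the pair $(r_\vv, \|\tilde\vv\|_2)$, or restricted to $\|\tilde\vv\|_2 = 1$, in which case every step you wrote goes through.
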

\yiping{Alternative way} 

\begin{proof}
Note that $\vx'$ is isotropic in span($R$) and thus $p(\vx')$ just depends on $\|\vx'\|$, we let $p_0: \R^+ \rightarrow \R^+$ satisfies $p_0(\|\vx'\|) = p(\vx')$.
Our goal is to calculate
\begin{eqnarray}
    \eee{p}{\vx \psi(\vw^\top\vx+\xi)} &=& \int_{\text{span}(R)}  \vx \psi(\vw^\top\vx+\xi)p(\vx-\vmu)\dd \vx \\
    &=& \int_{\text{span}(R)} (\vx'+\vmu) \psi(\vw^\top\vx' + r_{\vw})p(\vx') \dd \vx' 
\end{eqnarray}
where $\vx' := \vx - \vmu$ is isotropic.
Since $RR^\top \vw$ is the projection of $\vw$ onto space span($R$), we denote $\vv := RR^\top \vw$ and $y' := \vw^\top \vx' = \vv^\top \vx'$ since $\vx'$ lies in span($R$). Then let $S$ be any hyper-plane through $\vv$, which divide span($R$) into two symmetric part $V_{+}$ and $V_{-}$(Boundary is zero measurement set and can be ignored), we have,
\begin{eqnarray}
    P_1 &:=&\int_{\text{span}(R)} \vx' \psi(\vw^\top \vx' + r_{\vw}) p(\vx')\dd \vx'\\
    &=&(\int_{V_+} + \int_{V_{-}}) \vx' \psi(\vv^\top \vx' + r_{\vw}) p(\vx')\dd \vx'\label{eq:ortho_integral_2}\\
    &=&
    2\times\int_{V_{+}} \frac{\vv^\top\vx'}{\|\vv\|}\cdot \frac{\vv}{\|\vv\|} \cdot \psi(\vv^\top\vx' + r_{\vw})p(\vx')\dd \vx'\label{eq:ortho_integral}\\
    &=& \{\int_{\text{span}(R)} y' \psi(y' + r_{\vw})p(\vx')\dd \vx'\} \cdot \frac{\vv}{\|\vv\|^2}
\end{eqnarray}
Eqn.~\ref{eq:ortho_integral} holds since for every $\vx' \in V_{+}$, we can always find unique $\vx'' \in V_{-}$ defined as
\begin{equation}
    \vx'' = -(\vx' - \frac{\vv^\top \vx'}{\|\vv\|^2}\vv) + \frac{\vv^\top \vx'}{\|\vv\|^2}\vv = \frac{2y'}{\|\vv\|^2}\vv - \vx'
\end{equation}
where $\vx''$ and $\vx'$ satisfy $\|\vx''\|=\|\vx'\|$, $\vv^\top\vx''=\vv^\top\vx'$, and have equal reverse component $\pm(\vx' - \frac{\vv^\top \vx'}{\|\vv\|^2}\vv) $ perpendicular to $\vv$. Thus for the $\vx'$ in Eqn.~\ref{eq:ortho_integral_2}, only the component parallel to  $\vv$ remains. 
Furthermore, let $\{\vu_1,\ldots, \vu_{n-1}, \vv/\|\vv\|\}$ to be an orthonormal bases of span($R$) and denote  $x'_i:= \vu_i^\top\vx', \forall i \in [n-1]$, then we have
\begin{eqnarray}
    P_1&=&
    \{\int_{y'} y' \psi(y' + r_{\vw}) \dd (\frac{y'}{\|\vv\|})[\int_{x'_1}\cdots\int_{x'_{n-1}}p(\vx')\dd x'_1 \ldots \dd x'_{n-1}] \} \cdot \frac{\vv}{\|\vv\|^2}\\
    &=:&
    \{\int_{-\infty}^{+\infty} y' \psi(y' + r_{\vw}) p_{n}(y')\dd y'\} \cdot \frac{\vv}{\|\vv\|^3}
\end{eqnarray}
Here $p_n(y')$ is the probability density function of $y'$ obtained from $\vx'$. For the trivial case where $n=1$, clearly $p_n(y')=p_0(|y'|)=p(y')$. If $n \geq 2$, it can be further calculated as:
\begin{eqnarray}
    p_n(y')&=&
    \int_{x'_1} \cdots\int_{x'_{n-1}}p_0(\sqrt{(x'_1)^2 +\ldots + (x'_{n-1})^2 + (y')^2}) \ \cdot \dd x'_1 \ldots \dd x'_{n-1}\\
    &=& \int_{0}^{+\infty} p_0(\sqrt{y'^2+l^2})\cdot S_{n-1}(l) \dd l\\
    &=&
    \frac{(n-1)\pi^{(n-1)/{2}}}{\Gamma(\frac{n+1}{2})}\int_{0}^{+\infty} p_0(\sqrt{y'^2+l^2})\cdot l^{n-2}\dd l\\
    &=&
    \begin{cases}
    \begin{aligned}
    &\frac{2^{n/2}\pi^{n/2-1}}{(n-3)!!}\int_{0}^{+\infty} p_0(\sqrt{y'^2+l^2})\cdot l^{n-2}\dd l,&\quad & n\text{ is even} \\
    &\frac{2\pi^{(n-1)/2}}{(\frac{n-3}{2})!}\int_{0}^{+\infty} p_0(\sqrt{y'^2+l^2})\cdot l^{n-2}\dd l,&\quad & n\text{ is odd}
    \end{aligned}
    \end{cases}
\end{eqnarray}
where $S_{n}(R)=\frac{n\pi^{n/2}}{\Gamma(n/2+1)}R^{n-1}$ represents the surface area of an $n$-dimensional hyper-sphere of radius $l$. $\Gamma$ denotes the gamma function and we use the property that $\Gamma(n+1) = n!$ and $\Gamma(n+\frac{1}{2}) = (2n-1)!!\sqrt{\pi}2^{-n}$ for any $n \in \mathbb{N}^+$.

Similarly, for another term we have
\begin{eqnarray}
    P_2 &=& \int_{\text{span}(R)} \vmu\cdot \psi(\vw^\top \vx' + r_{\vw}) p(\vx')\dd \vx'\\
    &=&\{\int_{-\infty}^{+\infty} \psi(y' + r_{\vw}) p_n(y')\dd y'\} \cdot\frac{\vmu}{\|\vv\|}\\
\end{eqnarray}
Finally, let 
\begin{eqnarray}
    \theta_1(r_{\vw}) &:=&  \int_{-\infty}^{+\infty} \psi(y' + r_{\vw}) p_n(y')\dd y'\label{eq:rho_1_def}\\
    \theta_2(r_{\vw}) &:=&  \int_{-\infty}^{+\infty} y'\cdot \psi(y' + r_{\vw}) p_n(y')\dd y'
\end{eqnarray}
Then we arrive at the conclusion.
\end{proof}

\dynnonlinearnoattn*
\begin{proof}
Since backpropagated gradient $g_{h_k}$ is constant within each of its mixed components, we have:
\begin{eqnarray}
    \Delta_m &:=& \eee{q=m}{g_{h_k}h'_k \vb} = \sum_j \eee{q=m,c=j}{g_{h_k}h'_k \vb}\pr[c=j] \\
    &=& \sum_j \eee{q=m,c=j}{g_{h_k}} \pr[c=j] \eee{q=m,c=j}{h'_k \vb} \\
    &=& \sum_j a_j \eee{\vx\sim p(\vx -\bar\vx_j)}{\vb \phi'(\vw^\top \vf)}
\end{eqnarray}
Let $\psi = \phi'$. Note that $\vw^\top \vf = \vw^\top (U_c\vb + \vu_q) = \vv^\top \vb + \xi$ and with uniform attention $\vb = \vx$, we have:
\begin{equation}
    \Delta_m = \sum_j a_j \eee{\vx\sim p(\vx -\bar\vx_j)}{\vx \psi(\vv^\top \vx + \xi)}
\end{equation}
Using Lemma~\ref{lemma:hyperplane} leads to the conclusion. 
\end{proof}
\textbf{Remarks}. Note that if $\phi$ is linear, then $\psi\equiv 1$, $\theta_1\equiv 1$ and $\theta_2 \equiv 0$. In this case, $\theta_1$ is a constant, which marks a key difference between linear and nonlinear dynamics.

\subsubsection{(Tentative) Critical Point Analysis of Dynamics in Theorem~\ref{thm:dynamics-nonlinear-activation-uniform-attn}}
\label{sec:tentative-critical-point-analysis}
\def\trho{\tilde\theta}

\begin{lemma}[Property of $\theta_1,\theta_2$ with homogeneous activation]
\label{lemma:property-rho}
If $\phi(x) = x\phi'(x)$ is a homogeneous activation function and $\psi = \phi'$, then we have:
\begin{equation}
    \frac{\dd}{\dd r}\left(\theta_2(r) + r\theta_1(r)\right) = \theta_1(r) \label{eq:ode-rho}
\end{equation}
Integrating both sides and we get:
\begin{equation}
    \theta_2(r) + r\theta_1(r) = F(r) := \int_0^r \theta_1(r')\dd r' + C
\end{equation}
Let $r = 0$ and it is clear that $C = \theta_2(0)$. Thus 
\begin{equation}
    \theta_2(r) + r\theta_1(r) = F(r) = \int_0^r \theta_1(r')\dd r' + \theta_2(0)
\end{equation}
If $\psi \ge 0$, then $F(r)$ is a monotonous increasing function with $F(+\infty) = +\infty$. Furthermore, if $\lim_{r\rightarrow -\infty} r \theta_1(r) = 0$ and $\psi(-\infty) = 0$, then $\theta_2(-\infty) = 0$ and $F(-\infty) = 0$ and thus $F(r) \ge 0$.
\end{lemma}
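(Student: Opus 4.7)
The plan is to recognize that homogeneity $\phi(x)=x\psi(x)$ lets us combine the two integrals defining $\theta_1$ and $\theta_2$ into a single expectation of $\phi$, after which the ODE in Eqn.~\ref{eq:ode-rho} becomes nothing more than differentiation under the integral sign.

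First I would recall the representations from Lemma~\ref{lemma:hyperplane}:
\begin{equation*}
\theta_1(r)=\int_{-\infty}^{+\infty}\psi(y'+r)\,p_n(y')\,\dd y',\qquad \theta_2(r)=\int_{-\infty}^{+\infty} y'\,\psi(y'+r)\,p_n(y')\,\dd y'.
\end{equation*}
Adding $r\theta_1(r)$ to $\theta_2(r)$ pulls the factor of $y'+r$ inside the integral, giving
\begin{equation*}
\theta_2(r)+r\theta_1(r)=\int_{-\infty}^{+\infty}(y'+r)\,\psi(y'+r)\,p_n(y')\,\dd y'=\int_{-\infty}^{+\infty}\phi(y'+r)\,p_n(y')\,\dd y',
\end{equation*}
where the last equality uses exactly the homogeneity hypothesis $\phi(x)=x\psi(x)$. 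This is the key algebraic identity; everything else is calculus.

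Next, I would differentiate both sides in $r$ and justify passing $\dd/\dd r$ inside the integral (by dominated convergence, using integrability of $p_n$ and local boundedness of $\psi=\phi'$; this is a minor technical point but should be noted). Since $\dd\phi(y'+r)/\dd r=\psi(y'+r)$, we obtain $(\theta_2+r\theta_1)'(r)=\theta_1(r)$, which is Eqn.~\ref{eq:ode-rho}. Integrating from $0$ to $r$ and evaluating the left side at $0$ (where it equals $\theta_2(0)$) pins down $C=\theta_2(0)$ and yields the stated integral representation $F(r)=\int_0^r\theta_1(r')\,\dd r'+\theta_2(0)$.

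Finally, I would establish the three monotonicity/limit claims. Monotonicity of $F$ and non-negativity of $\theta_1$ when $\psi\ge 0$ are immediate from $F'=\theta_1$ and the definition of $\theta_1$ as an expectation of $\psi$. For $F(+\infty)=+\infty$, I would use the fact (already recorded in Lemma~\ref{lemma:hyperplane}) that $\psi(+\infty)=1$ implies $\theta_1(+\infty)=1$, so $\int_0^\infty\theta_1(r')\,\dd r'$ diverges. For $F(-\infty)=0$, the hypothesis $\lim_{r\to-\infty}r\theta_1(r)=0$ together with $\theta_2(-\infty)=0$ (which follows by dominated convergence from $\psi(-\infty)=0$) gives $F(-\infty)=\theta_2(-\infty)+\lim_{r\to-\infty}r\theta_1(r)=0$; combined with monotonicity this gives $F\ge 0$. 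The only step that needs any care is the dominated convergence argument for $\theta_2(-\infty)=0$ (requiring some integrability of $|y'|p_n(y')$), and more generally the interchange of limit/derivative and integral; these are the technical obstacles but are routine under the mild regularity already implicit in the setup.
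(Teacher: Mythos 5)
Your proof is correct and is essentially the verification that the paper's terse proof (``Simply verify Eqn.~\ref{eq:ode-rho} is true'') asks the reader to carry out. The algebraic rewrite $\theta_2(r)+r\theta_1(r)=\int \phi(y'+r)\,p_n(y')\,\dd y'$ \emph{before} differentiating is a clean way to do it: it makes the homogeneity $\phi(x)=x\psi(x)$ do all the work up front, so that differentiation under the integral immediately yields $\theta_1(r)$, and one never has to argue separately that the $x\psi'(x)$ contribution vanishes (which, for piecewise-constant $\psi$, would otherwise require a distributional or almost-everywhere argument). Your observations on the remaining limit claims --- that $F(+\infty)=+\infty$ actually uses $\theta_1(+\infty)=1$ (not merely $\psi\ge 0$), and that $\theta_2(-\infty)=0$ needs integrability of $|y'|p_n(y')$ for dominated convergence --- correctly identify the implicit regularity hypotheses in the lemma statement.
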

\begin{proof}
Simply verify Eqn.~\ref{eq:ode-rho} is true. 
\end{proof}

Overall, the dynamics can be quite complicated. We consider a special $C=2$ case with one positive ($a_+$, $r_+$ and $\bar\vx_+$) and one negative ($a_-$, $r_-$ and $\bar\vx_-$) distribution. 

\begin{restatable}[Existence of critical point of dynamics with ReLU activation]{lemma}{twoelementCPnonlinear} For any homogeneous activation $\phi(x) = x \phi'(x)$, any stationary point of Eqn.~\ref{eq:nonlinear-dynamics} must satisfy $\sum_j a_j F(r_j) = 0$, where $F(r) := \theta_2(0) + \int_0^r \theta_1(r')\dd r'$ is a monotonous increasing function. 
\end{restatable}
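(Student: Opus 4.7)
The plan is to exploit two stationarity conditions simultaneously: the one coming from $\dot\vv=0$ in Eqn.~\ref{eq:nonlinear-dynamics}, and the auxiliary one coming from $\dot\xi=0$, where $\xi(t)$ is the bias defined in Theorem~\ref{thm:dynamics-nonlinear-activation-uniform-attn}. The combination of these two conditions, together with the homogeneity identity of Lemma~\ref{lemma:property-rho}, will collapse into exactly the condition $\sum_j a_j F(r_j)=0$.

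First I would write down $\dot\xi$ explicitly under the setup of Theorem~\ref{thm:dynamics-nonlinear-activation-uniform-attn}. Since $\dot\xi = \eee{q=m}{g_{h_k}h_k'} = \sum_j a_j \eee{\vx\sim p(\vx-\bar\vx_j)}{\psi(\vv^\top \vx + \xi)}$, Lemma~\ref{lemma:hyperplane} directly yields $\dot\xi = \|\vv\|_2^{-1}\sum_j a_j \theta_1(r_j)$. Setting $\dot\xi=0$ at a critical point therefore gives the clean identity
\begin{equation}
\sum_j a_j \theta_1(r_j) = 0.
\end{equation}

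Next I would project the vector equation $\dot\vv=0$ onto $\vv$ by taking the inner product with $\vv$. Using $\vv^\top\bar\vx_j = r_j - \xi$, the projected equation (after multiplying by $\|\vv\|_2$) becomes
\begin{equation}
\sum_j a_j \theta_1(r_j)\,(r_j-\xi) + \sum_j a_j \theta_2(r_j) \;=\; 0.
\end{equation}
Distributing and grouping turns the left side into $\sum_j a_j\bigl[r_j\theta_1(r_j)+\theta_2(r_j)\bigr] - \xi\sum_j a_j \theta_1(r_j)$. The second sum vanishes by the $\dot\xi=0$ condition above, and Lemma~\ref{lemma:property-rho} identifies the bracketed quantity as $F(r_j)$. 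Thus $\sum_j a_j F(r_j)=0$, as desired.

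The monotonicity claim on $F$ is immediate: differentiating the definition gives $F'(r)=\theta_1(r)$, and $\theta_1(r)\ge 0$ whenever $\psi=\phi'\ge 0$ (e.g.\ for ReLU), with strict positivity on the region where $\psi$ is not identically zero. There is no serious obstacle here; the only subtlety is remembering that one must include the bias dynamics $\dot\xi=0$ as part of the critical-point conditions, since otherwise the projection of $\dot\vv=0$ onto $\vv$ would leave a spurious $\xi\sum_j a_j\theta_1(r_j)$ term and the cancellation into $F$ would fail. Homogeneity of $\phi$ is used only through Lemma~\ref{lemma:property-rho}, so no further structural assumption is required.
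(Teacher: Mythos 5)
Your proposal is correct and follows essentially the same route as the paper's own proof: project $\dot\vv=0$ onto $\vv$, use $\vv^\top\bar\vx_j = r_j - \xi$ to write the result in terms of $r_j\theta_1(r_j)+\theta_2(r_j)$, invoke $\dot\xi=0$ to kill the $\xi\sum_j a_j\theta_1(r_j)$ term, and then identify the remaining bracket as $F(r_j)$ via Lemma~\ref{lemma:property-rho}. The only cosmetic difference is the order in which you substitute the two stationarity conditions; the paper carries $-\|\vv\|_2\xi\dot\xi$ along symbolically and drops it at the end, while you impose $\dot\xi=0$ first.
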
\label{lemma:twoelementCPnonlinear}
\begin{proof}
We rewrite the dynamics equations for the nonlinear activation without attention case: 
\begin{equation}
    \dot\vv = \frac{1}{\|\vv\|_2}\sum_j a_j \theta_1(r_j)\bar\vx_j + \frac{1}{\|\vv\|^3_2} \sum_j a_j \theta_2(r_j) \vv, \qquad \dot \xi = \frac{1}{\|\vv\|_2} \sum_j a_j \theta_1(r_j)  \label{eq:stationary-vxi}
\end{equation}
Notice that $\bar\vx_j^\top \vv = r_j - \xi$, this gives that:
\begin{eqnarray}
    \|\vv\|_2 \vv^\top \dot\vv &=& \sum_j a_j \theta_1(r_j) (r_j - \xi) + \sum_j a_j \theta_2(r_j) \\
    &=& \sum_j a_j (r_j \theta_1(r_j) + \theta_2(r_j)) - \xi \sum_j a_j \theta_1(r_j) \\
    &=& \sum_j a_j F(r_j) - \|\vv\|_2 \xi\dot \xi 
\end{eqnarray}
in which the last equality is because the dynamics of $\xi$, and due to Lemma~\ref{lemma:property-rho}. Now we leverage the condition of stationary points ($\dot\vv = 0$ and $\dot\xi = 0$), we arrive at the necessary conditions at the stationary points:
\begin{equation}
    \sum_j a_j F(r_j) = 0 \label{eq:fzero}
\end{equation}
Note that in general, the scalar condition above is only necessary but not sufficient. Eqn.~\ref{eq:stationary-vxi} has $M_c+1$ equations but we only have two scalar equations (Eqn.~\ref{eq:stationary-vxi} and $\|\vv\|_2\dot\xi=\sum_j a_j \theta_1(r_j) = 0$). However, we can get a better characterization of the stationary points if there are only two components $a_+$ and $a_-$: 

\textbf{A special case: one positive and one negative samples}
In this case, we have (here $r_+ := \vv^\top\bar\vx_+ + \xi$ and $r_- := \vv^\top\bar\vx_- + \xi$):
\begin{equation}
    a_+ F(r_+) - a_- F(r_-) = 0
\end{equation}
So the sufficient and necessary condition for $(\vv, \xi)$ to be the critical point is that
\begin{equation}\label{eq:nonlinear-dyn-iff}
    \frac{F(r_+)}{F(r_-)} = \frac{\theta_1(r_+)}{\theta_1(r_-)} = \frac{a_-}{a_+}
\end{equation}
Without loss of generality, we consider the case where $\phi$ is ReLU and $\psi(r) = \mathbf{I}[r > 0]$. 
Note that $\theta_1$ is a monotonously increasing function, we have $\theta_1^{-1}:(0, 1) \rightarrow \R$ such that $\theta_1^{-1}(\theta_1(r)) = r$ for any $r \in \R$. And we denote $G: (0, 1) \rightarrow \R$ which satisfies:
\begin{equation}
    G(y) = F(\theta_1^{-1}(y))
\end{equation}
and $y_+ := \theta_1^{-1}(r_+)$, $y_- := \theta_1^{-1}(r_-)$. Then if we can find some line $l_k: y = kx$ for some $k \in \R$ such that $l_k$ has at least two points of intersection $(y_i, ky_i), i =1,2$ with curve $G$ and $a_-/a_+ = y_1/y_2$ or $a_-/a_+ = y_2/y_1$, then we can always find some $\vv$ and $\xi$ such that Eqn.~\ref{eq:nonlinear-dyn-iff} holds.

On the other hand, it's easy to find that (Fig.~\ref{fig:plot_g}): 
\begin{eqnarray*}
    \frac{\dd G(y)}{\dd y}\left.\right|_{y = \theta_1(x)} &=& \frac{\theta_1(x)}{p_n(x)} > 0 \\ 
    \lim_{y\rightarrow 1} G(y)&=& \lim_{r\rightarrow +\infty} F(r) = +\infty \\
    \lim_{y\rightarrow 0} G(y)&=& 
    \lim_{r\rightarrow -\infty} F(r) = \lim_{r\rightarrow-\infty}r\theta_1(r) 
\end{eqnarray*}

\begin{figure}
    \centering
    \includegraphics[width=0.4\textwidth]{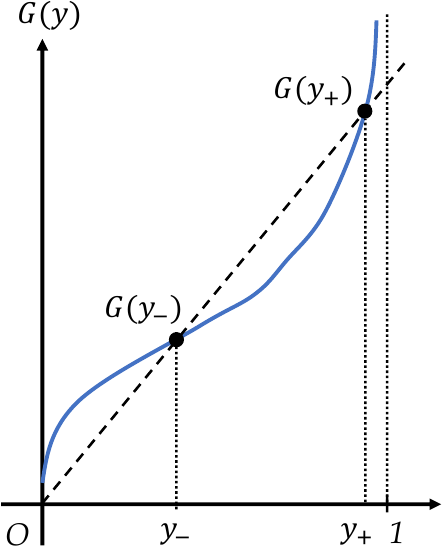}
    \caption{The plot of function $G(y)$.}
    \label{fig:plot_g}
\end{figure}

Note that since $G(y_+) / G(y_-) = y_+ / y_-$, we have $G(y_+) / y_+ = G(y_-) / y_-$ and thus $(y_+, G(y_+))$ and $(y_-, G(y_-))$ are lying at the same straight line.  


For finding the sufficient condition, we focus on the range $x \geq 0$ and $\theta_1(x)\geq\frac{1}{2}$. Then in order that line $l_k: y = kx$ for some $k \in \R$ has at least two points of intersection with curve $G$, we just need to let
\begin{equation}\label{eq:rho_2-p_n_condition}
    \frac{G(\tilde\theta_1(0))}{\tilde\theta_1(0)} \geq \frac{\dd G(y)}{\dd y}\left.\right|_{y = \tilde\theta_1(0)}\iff 
    \tilde\theta_2(0)\cdot p_n(0) = p_n(0)\int_{0}^{+\infty}y'p_n(y')\dd y' \geq \frac{1}{4}
\end{equation}

For convenience, let $S_{l_k} := \{(x,y)| y = kx\}$ and $S_{G} := \{(x,y)|y = G(x)\}$ to be the image of the needed functions. Denote $\pi_1:\R^2 \rightarrow\R: \pi_1((x,y)) = x$ for any $x,y \in \R$, $\pi_1(S) = \{\pi_1(s)| \forall s \in S\}$. Therefore, if Eqn.~\ref{eq:rho_2-p_n_condition} holds, then the following set $\mathcal{S}$ will not be empty. 
    \begin{equation}
         \mathcal{S}:=\bigcup_{k\in\R}\{\frac{x_2}{x_1} \ | \  \forall x_1 \neq x_2 \in  \pi_1(S_{l_k}\cap S_{G})\}
    \end{equation}
And Eqn.~\ref{eq:nonlinear-dynamics} has critical points if $a_+/a_- \in \mathcal{S}$. And it's easy to find that $\forall s \in \mathcal{S}$, $s \in (\frac{1}{2}, 1)\cup(1, 2)$. Similar results also hold for other homogeneous activations.

\textbf{Remarks.} It is often the case that $y_- < 1/2$ and $y_+ > 1/2$, since $G(y)$ when $y > 1/2$ is convex and there will be at most two intersection between a convex function and a straight line. This means that $r^*_+ > 0$ and $r^*_- = \xi_* < 0$. 

    \end{proof}

\subsection{Several remarks}
\textbf{The intuition behind $\xi$:} Note that while node $k$ in MLP layer does not have an explicit bias term, our analysis above demonstrates that there exists an ``implicit bias'' term $\xi_k(t)$ embedded in the weight vector $\vw_k$:
\begin{equation}
\vw(t) = 
\vw(0) + U_C [\vv(t) - \vv(0)] + \vu_m \xi(t) 
\end{equation}
This bias term allows encoding of the query embedding $\vu_m$ into the weight, and the negative bias $\xi^*< 0$ ensures that given the query $q=m$, there needs to be a positive inner product between $\vv_*$ (i.e., the ``pattern template'') and the input contextual tokens, in order to activate the node $k$. 

\textbf{Pattern superposition.} Note that due to such mechanism, one single weight $\vw$ may contain multiple query vectors (e.g., $\vu_{m_1}$ and $\vu_{m_2}$) and their associated pattern templates (e.g., $\vv_{m_1}$ and $\vv_{m_2}$), as long as they are orthogonal to each other. Specifically, if $\vw = \vv_{m_1} - \xi_{m_1}\vu_{m_1} + \vv_{m_2} - \xi_{m_2}\vu_{m_2}$, then it can match both pattern 1 and pattern 2. We called this ``pattern superposition'', as demonstrated in Fig.~\ref{fig:emb-example}.

\begin{figure}
    \centering
    \includegraphics[width=\textwidth]{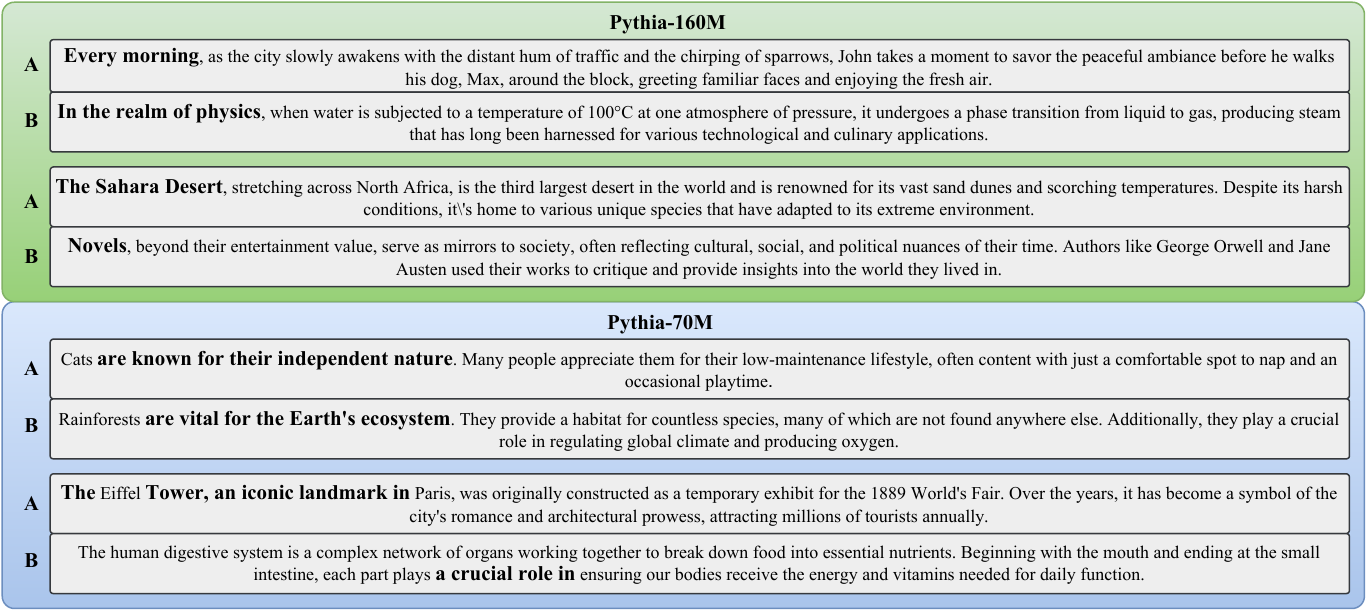}
    \caption{\small Examples of \emph{pattern superposition}: the same neuron in MLP hidden layers can be activated by multiple irrelevant combinations of tokens (A and B in each group, e.g., the same neuron activated by both ``Every morning'' and ``In the realm of physics''), in Pythia-70M and Pythia-160M models. Bold tokens are what the query token attends to.}
    \label{fig:emb-example}
\end{figure}

\begin{lemma}\label{lemma:homo-theta}
If $\phi(x)$ is homogeneous, i.e., $\phi(x) = \phi'(x)x$, then there exist constant $c_-, c_+ \in \R$ depend on $\phi$ such that 
$\phi(x) = c_-\mathbf{1}[x < 0] + c_+\mathbf{1}[x>0]$, and thus
\begin{equation}
    \frac{\dd \theta_1}{\dd r} = (c_-+c_+)p_n(r), 
    \quad\frac{\dd \theta_2}{\dd r} = -(c_-+c_+)r\cdot p_n(r)
\end{equation}
\end{lemma}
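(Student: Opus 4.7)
The plan is to first use the functional equation $\phi(x) = x\phi'(x)$ to pin down the structure of $\phi$, then reduce both derivatives to a single application of Leibniz's rule. On each half-line $x>0$ and $x<0$ separately the equation is the separable ODE $\phi'(x)/\phi(x) = 1/x$, whose general solution is $\phi(x) = C\cdot x$ with an integration constant that may differ across the two sides. Denote these constants by $c_+$ for $x>0$ and $-c_-$ for $x<0$, so that $\phi(x) = c_+ x$ on the positive half-line and $\phi(x) = -c_- x$ on the negative half-line. Then $\psi(x) := \phi'(x) = c_+\mathbf{1}[x>0] - c_-\mathbf{1}[x<0]$; this is the $\phi'$-version of the piecewise-constant expression in the lemma statement, which is what actually enters the integrals defining $\theta_1,\theta_2$.

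Next, substitute $\psi$ into the formulas from Lemma~\ref{lemma:hyperplane},
\begin{equation}
\theta_1(r) = \int_{-\infty}^{+\infty}\psi(y'+r)\,p_n(y')\,\dd y', \qquad \theta_2(r) = \int_{-\infty}^{+\infty} y'\psi(y'+r)\,p_n(y')\,\dd y',
\end{equation}
and split each integral at the sign change $y' = -r$. For $\theta_1$ this gives
\begin{equation}
\theta_1(r) = -c_-\int_{-\infty}^{-r}p_n(y')\,\dd y' + c_+\int_{-r}^{+\infty}p_n(y')\,\dd y',
\end{equation}
with the analogous decomposition (carrying an extra factor of $y'$ inside each integrand) for $\theta_2$. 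Differentiating in $r$ by Leibniz and using the evenness $p_n(-r)=p_n(r)$, which follows immediately from the radial-integral representation $p_n(y') = \int_0^\infty p_0(\sqrt{y'^2+l^2})\,S_{n-1}(l)\,\dd l$ derived earlier in the appendix, produces the two claimed formulas.

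The only point requiring care is the sign bookkeeping in Leibniz's rule. The upper endpoint $-r$ of $(-\infty,-r]$ moves leftward as $r$ increases and therefore contributes $-p_n(-r)$, while the lower endpoint $-r$ of $[-r,+\infty)$ also moves leftward and contributes $+p_n(-r)$; when paired with the coefficients $-c_-$ and $c_+$ respectively, these two contributions reinforce rather than cancel, yielding the combination $c_+ + c_-$ in $\dd\theta_1/\dd r$. For $\dd\theta_2/\dd r$ the same two boundary evaluations occur, but each now carries the extra factor $y'=-r$ picked up from the integrand; this is exactly the source of the $-r$ prefactor in the stated formula, and the overall sign reverses accordingly. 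Everything else is routine.
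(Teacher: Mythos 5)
Your proof is correct and reaches the same two-stage plan the paper uses: first pin down the structure of $\phi'$, then compute the $\theta$-derivatives by splitting the integral at $y'=-r$. The paper establishes piecewise constancy of $\phi'$ by a direct difference-quotient argument (substituting $\phi(x)=x\phi'(x)$ into the definition of $\phi'(x)$ and taking $\delta x\to 0^+$, forcing the one-sided derivative of $\phi'$ to vanish); you instead solve the ODE $\phi'/\phi=1/x$ on each half-line, which is a genuinely different and arguably cleaner route, though as written it implicitly divides by $\phi$ — the cleanest version is to observe $(\phi(x)/x)' = [x\phi'(x)-\phi(x)]/x^2 = 0$, which avoids the issue of $\phi$ vanishing. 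For the second stage the paper writes only "by direct calculation"; you actually carry out the Leibniz-rule computation and invoke the evenness of $p_n$, which is useful filling-in. One point you should flag rather than pass over silently: your $\psi(x) = c_+\mathbf{1}[x>0] - c_-\mathbf{1}[x<0]$ is \emph{not} the $\phi'$-version of the lemma's displayed expression $c_-\mathbf{1}[x<0]+c_+\mathbf{1}[x>0]$ — it differs by the sign of $c_-$. Your convention is in fact the one that makes the claimed derivative $\frac{\dd\theta_1}{\dd r}=(c_-+c_+)p_n(r)$ come out right; with the lemma's literal convention the Leibniz computation gives $(c_+-c_-)p_n(r)$ instead (check: LeakyReLU with negative slope $\alpha$ gives $\theta_1(r)=1-(1-\alpha)\Phi(-r)$, hence $\theta_1'(r)=(1-\alpha)p_n(r)$). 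So there is a sign typo in the lemma statement (in addition to the evident $\phi\to\phi'$ typo), and you have implicitly corrected it; you should say so rather than assert that your piecewise form "is the $\phi'$-version of the piecewise-constant expression in the lemma statement."
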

\begin{proof}
    For any $x > 0$, we have
    \begin{eqnarray}
        \phi'(x) 
        &=& \lim_{\delta x \rightarrow 0+}\frac{\phi(x+\delta x) - \phi(x)}{\delta x} \\
        &=& \lim_{\delta x \rightarrow 0+}\frac{\phi'(x+\delta x) - \phi'(x)}{\delta x}\cdot x + \lim_{\delta x \rightarrow 0}\phi'(x+\delta x)\\
        &=& x\cdot\lim_{\delta x \rightarrow 0+}\frac{\phi'(x+\delta x) - \phi'(x)}{\delta x} + \phi'(x) \\
    \end{eqnarray}
    So for any $x > 0$, $\phi'(x)$ must be constant, and similar results hold for $x < 0$. Then by direct calculation, we can get the results.
\end{proof}

\def\vH{{\bm{H}}}
\def\vtriangledown{{\bm{\triangledown}}}

\subsubsection{With self-attention}
\begin{lemma}
\label{lemma:gy}
Let $g(y) := \frac{1 - e^{-y^2}}{y}$. Then $\max_{y \ge 0} g(y) \le \frac{1}{\sqrt{2}}$.
\end{lemma}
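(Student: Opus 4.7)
The plan is a one-variable calculus maximization followed by a clean AM--GM bound. First, I would check the boundary behavior of $g$ on $[0,\infty)$: by the Taylor expansion $1-e^{-y^2}=y^2-\tfrac12 y^4+\cdots$, we have $g(y)\to 0$ as $y\to 0^+$, and $g(y)\to 0$ as $y\to\infty$ since the numerator is bounded by $1$. Hence $g$ (extended by $g(0):=0$) is continuous and nonnegative on $[0,\infty)$ with $g(0)=\lim_{y\to\infty}g(y)=0$, so any positive supremum is attained at an interior critical point $y^*>0$.

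Next I would locate that critical point. A direct computation gives
\begin{equation}
g'(y)=\frac{(2y^2+1)e^{-y^2}-1}{y^2},
\end{equation}
so $g'(y^*)=0$ forces $(2y^{*2}+1)e^{-y^{*2}}=1$, i.e.\ $e^{-y^{*2}}=\frac{1}{2y^{*2}+1}$. Substituting back into the definition,
\begin{equation}
g(y^*)=\frac{1}{y^*}\!\left(1-\frac{1}{2y^{*2}+1}\right)=\frac{2y^*}{2y^{*2}+1}.
\end{equation}

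Finally, AM--GM closes the bound: $2y^{*2}+1\ge 2\sqrt{2y^{*2}\cdot 1}=2\sqrt{2}\,y^*$, so $g(y^*)\le \frac{2y^*}{2\sqrt{2}\,y^*}=\frac{1}{\sqrt{2}}$. Combined with the first step, this gives $\max_{y\ge 0}g(y)\le 1/\sqrt{2}$, with equality approached when $2y^{*2}=1$, i.e.\ $y^*=1/\sqrt{2}$ (equivalently, a solution of $e^{u}=2u+1$ with $u=y^{*2}$ achieves the AM--GM equality). There is no real obstacle here: the only care needed is to argue that the supremum is attained so that the critical-point analysis controls the global maximum; the rest is a two-line AM--GM.
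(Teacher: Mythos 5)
Your proof is correct and takes essentially the same route as the paper: both locate interior critical points via $(2y_*^2+1)e^{-y_*^2}=1$, substitute back to obtain $g(y_*)=\tfrac{2y_*}{2y_*^2+1}$, and conclude by AM--GM together with $g(0)=g(+\infty)=0$. One small aside: your parenthetical equality remark is slightly off --- $y^*=1/\sqrt{2}$ does \emph{not} satisfy $e^{y^{*2}}=2y^{*2}+1$, so the AM--GM equality point is not a critical point and the bound $1/\sqrt{2}$ is in fact strict; this is harmless since the lemma only asserts an inequality.
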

\begin{proof}
Any of its stationary point $y_*$ must satisfies $g_y'(y_*) = 0$, which gives:
\begin{equation}
    e^{-y_*^2} = \frac{1}{2y_*^2+1}
\end{equation}
Therefore, at any stationary points, we have:
\begin{equation}
    g(y_*) = \frac{2y_*}{2y_*^2+1} = \frac{2}{2y_*+y_*^{-1}} \le \frac{1}{\sqrt{2}}
\end{equation}
since $g(0) = g(+\infty) = 0$, the conclusion follows. 
\end{proof}

\begin{lemma}[Bound of Gaussian integral]
\label{lemma:gaussian-integral}
Let $G(y) := e^{-y^2/2} \int_0^y e^{x^2/2}\dd x$, then $0\le G(y) \le 1$ for $y \ge 0$.
\end{lemma}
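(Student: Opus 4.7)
The nonnegativity $G(y)\ge 0$ is immediate since $e^{-y^2/2}>0$ and the integrand $e^{x^2/2}$ is positive on $[0,y]$. For the upper bound, I would clear the denominator: the inequality $G(y)\le 1$ is equivalent to $\int_0^y e^{x^2/2}\,\mathrm dx \le e^{y^2/2}$, so it suffices to show that
\[
H(y) := e^{y^2/2} - \int_0^y e^{x^2/2}\,\mathrm dx \;\ge\; 0 \qquad \text{for all } y\ge 0.
\]

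Next I would analyze $H$ via its derivative. A direct computation gives $H'(y) = y\,e^{y^2/2} - e^{y^2/2} = (y-1)\,e^{y^2/2}$, so $H$ is strictly decreasing on $[0,1]$ and strictly increasing on $[1,\infty)$, attaining its global minimum on $[0,\infty)$ at $y=1$. Since $H(0)=1>0$, the bound $H\ge 0$ reduces to verifying $H(1)\ge 0$, i.e.\ $\int_0^1 e^{x^2/2}\,\mathrm dx \le e^{1/2}$. This last estimate is immediate from the pointwise bound $e^{x^2/2}\le e^{1/2}$ on $[0,1]$.

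An equivalent route, which I mention in case the ODE viewpoint is preferred for consistency with Lemma~\ref{lemma:gy}, uses the identity $G'(y) = 1 - y\,G(y)$ (immediate from the product rule and the fundamental theorem of calculus, with $G(0)=0$). On $[0,1]$ one has $G(y)\le y\le 1$ by bounding $e^{x^2/2}\le e^{y^2/2}$ in the integrand. For $y>1$, if $G$ ever reached the value $1$ at some first point $y_1>1$, then $G'(y_1)=1-y_1<0$ would contradict $G$ crossing $1$ from below; hence $G(y)\le 1$ throughout.

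There is no real obstacle here; the only point requiring a moment of care is recognizing that the extremum of $H$ on $[0,\infty)$ occurs at $y=1$ (the sign change of $H'$) so that the global estimate collapses to checking a single scalar inequality, which is trivially true.
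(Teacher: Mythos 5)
Your proposal is correct, but it takes a genuinely different route from the paper. The paper's argument is a comparison-of-integrand argument designed to reuse Lemma~\ref{lemma:gy}: it bounds $e^{x^2/2}\le e^{xy/2}$ for $0\le x\le y$, integrates the dominating function in closed form to get $G(y)\le \tfrac{2}{y}\bigl(1-e^{-y^2/2}\bigr)=\sqrt{2}\,g(y/\sqrt{2})$, and then invokes $\max_{y\ge 0}g(y)\le \tfrac1{\sqrt2}$ from Lemma~\ref{lemma:gy}. You instead clear the prefactor and study $H(y)=e^{y^2/2}-\int_0^y e^{x^2/2}\,\mathrm dx$ directly: $H'(y)=(y-1)e^{y^2/2}$ pins the global minimum at $y=1$, and $H(1)\ge 0$ is the trivial bound $\int_0^1 e^{x^2/2}\,\mathrm dx\le e^{1/2}$. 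Your alternative ODE viewpoint ($G'=1-yG$, $G(0)=0$, with a barrier argument at the first hypothetical crossing of $1$) is also sound, once you note $G(1)<1$ so a first crossing above $y=1$ would force $G'\ge 0$ there while $1-y_1<0$. Both of your arguments are self-contained and more elementary than the paper's, since they make no use of Lemma~\ref{lemma:gy}; the paper's approach buys consistency with the adjacent lemma and keeps the two bounds ($g$ and $G$) structurally parallel, at the cost of an extra intermediate inequality. Either route is fine; if you wanted to mirror the paper's economy, you could observe that your explicit $H$-minimization actually yields the strictly sharper bound $G(y)\le 1-e^{-1/2}H(1)<1$ with no extra work.
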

\begin{proof}
$G(y) \ge 0$ is obvious. Note that 
\begin{eqnarray}
    G(y) &:=& e^{-y^2/2} \int_0^y e^{x^2/2}\dd x \le e^{-y^2/2} \int_0^y e^{xy/2}\dd x = \frac2y\left(1 - e^{-y^2/2}\right) = \sqrt{2}g(y/\sqrt{2})\nonumber 
\end{eqnarray}
Applying Lemma~\ref{lemma:gy} gives the conclusion. 
\end{proof}

\convergencenonlinearitysa*
\begin{proof}
We first consider when $\vmu > 0$. We can write down the dynamics in a component wise manner, since all components share the same scalar constant:
\begin{equation}
    \frac{\dot v_j}{\dot v_k} = \frac{(\mu_j - v_j)e^{v_j^2/2}}{(\mu_k - v_k)e^{v_k^2/2}} 
\end{equation}
which gives the following separable form:
\begin{equation}
    \frac{\dot v_j e^{-v^2_j/2}}{\mu_j-v_j} = \frac{\dot v_k e^{-v^2_k/2}}{\mu_k-v_k} 
    \label{eq:convergence-speed-separable}
\end{equation}

Let 
\begin{equation}
F(r, r_0, \mu) := \int_{r_0\mu}^{r\mu} \frac{e^{-v^2/2}}{\mu-v}\dd v = \int_{r_0}^{r} \frac{e^{-\mu^2 x^2/2}}{1-x}\dd x \quad\quad (x=v/\mu) 
\end{equation}
Integrating both sides of Eqn.~\ref{eq:convergence-speed-separable} from $t=0$ to $t$, the dynamics must satisfy the following equation at time $t$:
\begin{equation}
    F(r_j(t), r_j(0), \mu_j) = F(r_k(t), r_k(0), \mu_k) \label{eq:integral-nonlinear-attn}
\end{equation}
where $r_j(t) := v_j(t) / \mu_j$. According to the dynamics, $r_j(t)\rightarrow 1$ and the question is how fast the convergence is. Depending on the initialization, $r_j(t) > 1$ or $r_j(t) < 1$.  

Eqn.~\ref{eq:integral-nonlinear-attn} implicitly gives the relationship between $r_j(t)$ and $r_k(t)$ (and thus $\delta_j(t)$ and $\delta_k(t)$). Now the question is how to bound $F(r,r_0,\mu)$, which does not have close-form solutions.

Note that we have:
\begin{eqnarray}
    \frac{\partial F}{\partial \mu} &=& -\mu\int_{r_0}^{r} \frac{x^2 e^{-\mu^2 x^2/2}}{1-x}\dd x \\
    &=& \mu \int_{r_0}^r \frac{1-x^2}{1-x} e^{-\mu^2 x^2/2} \dd x - \mu \int_{r_0}^r \frac{e^{-\mu^2 x^2/2}}{1-x}\dd x \\
    &=& \mu \int_{r_0}^r (1+x) e^{-\mu^2 x^2/2} \dd x - \mu F(r,r_0,\mu) \\
    &=& \sqrt{\frac{\pi}{2}}\left[\erf\left(\frac{r\mu}{\sqrt{2}}\right) - \erf\left(\frac{r_0\mu}{\sqrt{2}}\right)\right] + \frac{1}{\mu}(e^{-r_0^2\mu^2/2} - e^{-r^2\mu^2/2}) - \mu F(r,r_0,\mu)
\end{eqnarray}
Let 
\begin{equation}
    \zeta(r,r_0,\mu) :=  \sqrt{\frac{\pi}{2}}\left[\erf\left(\frac{r\mu}{\sqrt{2}}\right) - \erf\left(\frac{r_0\mu}{\sqrt{2}}\right)\right] + \frac{1}{\mu}(e^{-r_0^2\mu^2/2} - e^{-r^2\mu^2/2})
\end{equation}
Applying Lemma~\ref{lemma:gy} and notice that $\mu > 0$, we have
\begin{equation}
 |\zeta(r,r_0,\mu)| \le \sqrt{2\pi} + \sqrt{2}(|r_0| + |r|)/\sqrt{2} \le \sqrt{2\pi} + \max(2|r_0|, |r_0| + 1) =: M(r_0)   
\end{equation}
which means that $|\zeta(r,r_0,\mu)|$ is uniformly bounded, regardless of $\mu$ and $r(t)$ (note that $r$ is bounded  and will converge to $1$ from the dynamics). Integrating both side and we have:
\begin{eqnarray}
\frac{\partial}{\partial \mu} \left(e^{\mu^2/2} F(r, r_0, \mu)\right) &=& \zeta(r,r_0,\mu)e^{\mu^2/2}\\
e^{\mu^2/2} F(r, r_0,\mu) - F(r,r_0, 0) &=& \int_0^\mu \zeta(r,r_0, x)e^{x^2/2}\dd x\\
F(r, r_0,\mu) &=& e^{-\mu^2/2}F(r, r_0,0) + e^{-\mu^2/2}\int_0^\mu \zeta(r,r_0, x)e^{x^2/2}\dd x
\end{eqnarray}
Note that $F(r, r_0, 0)$ has a close form:
\begin{equation}
   F(r, r_0, 0) = \int_{r_0}^r \frac{1}{1-x}\dd x = \ln\frac{1-r_0}{1-r} 
\end{equation}
has a close-form solution that works for both $r_0 < r < 1$ and $r_0 > r > 1$ (the situations that 1 is between $r_0$ and $r$ won't happen). Using mean-value theorem, we have:
\begin{equation}
F(r, r_0,\mu) = e^{-\mu^2/2}\ln\frac{1-r_0}{1-r} + \zeta(r,r_0,\bar\mu) e^{-\mu^2/2}\int_0^\mu e^{x^2/2}\dd x
\end{equation}
Applying Lemma~\ref{lemma:gaussian-integral}, we have the following bound for $F(r,\mu)$:
\begin{equation}
   -M(r_0) \le F(r, \mu) - e^{-\mu^2/2}\ln\frac{1-r_0}{1-r} \le M(r_0)
\end{equation}

When $r$ is close to $1$ (near convergence), the term $e^{-\mu^2/2}\ln\frac{1-r_0}{1-r}$ (with fixed $\mu$ and fixed $r_0$) is huge compared to the constant $M(r_0)$, which is $\sqrt{2\pi} + 1.5 \approx 4.0066$ for e.g., $|r_0| = 1/2$, and thus $F(r,\mu) \rightarrow e^{-\mu^2/2}\ln\frac{1-r_0}{1-r}$. 

To be more concrete, note that $\delta(t) = 1-v(t)/\mu = 1-r(t)$, we let 
\begin{equation}
 \rho(\delta(t), \mu)= F(1-\delta(t), 1-\delta(0), \mu) - e^{-\mu^2/2}\ln\frac{\delta(0)}{\delta(t)} \in (-M(r_0), M(r_0))
\end{equation}
And using Eqn.~\ref{eq:integral-nonlinear-attn}, we have:
\begin{equation}
F(1-\delta_j(t), 1-\delta_j(0), \mu_j) = F(1-\delta_k(t), 1-\delta_k(0), \mu_k) 
\end{equation}
Then 
\begin{eqnarray}
     \lambda_{jk}(t) &:=& \rho(\delta_k(t),\mu_k) - \rho(\delta_j(t),\mu_j) \\
     &=& e^{-\mu_j^2/2} \ln \frac{\delta_j(0)}{\delta_j(t)} - e^{-\mu_k^2/2} \ln\frac{\delta_k(0)}{\delta_k(t)} 
\end{eqnarray}
and $|\lambda_{jk}(t)| \leq M(r_j(0)) + M(r_k(0))$. Then we arrive at the conclusion. 
\end{proof}

\subsection{Hierarchical Latent Tree Models (Section ~\ref{sec:hierarchical-dynamics})}
\label{sec:hierarchical-latent-tree}
We formally introduce the definition of \hblt{} here. Let $y_\alpha$ be a binary variable at layer $s$ (upper layer and $y_\beta$ be a binary variable at layer $s-1$ (lower layer). We use a 2x2 matrix $P_{\beta|\alpha}$ to represent their conditional probability:
\begin{equation}
    P_{\beta|\alpha} := [\pr[y_\beta|y_\alpha]] = \left[
    \begin{array}{cc}
        \pr[y_\beta=0|y_\alpha=0] & \pr[y_\beta=0|y_\alpha=1] \\ 
        \pr[y_\beta=1|y_\alpha=0] & \pr[y_\beta=1|y_\alpha=1] 
    \end{array}
    \right]
\end{equation}
\begin{definition}
Define $2\times 2$ matrix $M(\rho) := \frac12\left[\begin{array}{cc}
1 + \rho & 1 - \rho \\
1 - \rho & 1 + \rho
\end{array}
\right]$ and $2$-dimensional vector $\vp(\rho) = \frac12[1+\rho,1-\rho]^\top$ for $\rho \in [-1,1]$.
\end{definition}

\begin{lemma}[Property of $M(\rho)$] $M(\rho)$ has the following properties:
\begin{itemize}
    \item $M(\rho)$ is a symmetric matrix. 
    \item $M(\rho)\vone_2 = \vone_2$.
    \item $M(\rho_1)M(\rho_2) = M(\rho_1\rho_2)$. So matrix multiplication in $\{M(\rho)\}_{\rho\in[-1,1]}$ is communicative and isomorphic to scalar multiplication.  
    \item $M(\rho_1)\vp(\rho_2) = \vp(\rho_1\rho_2)$.
\end{itemize}
\end{lemma}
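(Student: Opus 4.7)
The plan is to verify the four properties by a single unifying observation, namely a spectral decomposition of $M(\rho)$ into two fixed orthogonal idempotents. Define $P_+ := \frac{1}{2}\vone_2\vone_2^\top$ and $P_- := \frac{1}{2}\begin{pmatrix}1 & -1\\-1 & 1\end{pmatrix}$. A direct calculation gives $M(\rho) = P_+ + \rho P_-$. One checks immediately that $P_+, P_-$ are symmetric idempotents satisfying $P_+ + P_- = I_2$ and $P_+ P_- = P_- P_+ = 0$, i.e.\ they are the orthogonal projectors onto $\mathrm{span}(\vone_2)$ and $\mathrm{span}([1,-1]^\top)$, respectively.

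Given this decomposition, each property follows quickly. Symmetry is immediate since $M(\rho)$ is a real linear combination of symmetric matrices $P_+$ and $P_-$. For $M(\rho)\vone_2 = \vone_2$, note $P_+\vone_2 = \vone_2$ and $P_-\vone_2 = \vzero$, so $M(\rho)\vone_2 = \vone_2 + \rho\cdot\vzero = \vone_2$. For the multiplicative property, expand
\begin{equation}
M(\rho_1)M(\rho_2) = (P_+ + \rho_1 P_-)(P_+ + \rho_2 P_-) = P_+^2 + \rho_1\rho_2 P_-^2 + \rho_1 P_- P_+ + \rho_2 P_+ P_-,
\end{equation}
which collapses to $P_+ + \rho_1\rho_2 P_- = M(\rho_1\rho_2)$ by the idempotency and orthogonality relations; commutativity is then manifest since the product depends on $\rho_1,\rho_2$ only through $\rho_1\rho_2$.

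For the action on $\vp(\rho)$, decompose $\vp(\rho_2) = \tfrac{1}{2}\vone_2 + \tfrac{\rho_2}{2}[1,-1]^\top$. Since $[1,-1]^\top$ is an eigenvector of both $P_+$ (with eigenvalue $0$) and $P_-$ (with eigenvalue $1$), and $\vone_2$ is an eigenvector of $P_+$ (eigenvalue $1$) and $P_-$ (eigenvalue $0$), we get $M(\rho_1)\vp(\rho_2) = \tfrac{1}{2}\vone_2 + \tfrac{\rho_1\rho_2}{2}[1,-1]^\top = \vp(\rho_1\rho_2)$.

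There is no real obstacle in this lemma — every claim is a one-line consequence of the identity $M(\rho) = P_+ + \rho P_-$ — so the only thing worth being careful about is setting up the projector decomposition cleanly at the start so that all four items can be discharged uniformly rather than by four separate brute-force $2\times 2$ matrix computations. A purely computational alternative (multiply out the $2\times 2$ entries) works just as well but does not illuminate why the family $\{M(\rho)\}$ is closed under multiplication and isomorphic to the multiplicative monoid on $[-1,1]$.
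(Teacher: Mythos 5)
Your proof is correct and is essentially the paper's own argument: the paper also writes $M(\rho)=\tfrac12(\vone\vone^\top+\rho\,\ve\ve^\top)$ with $\ve=[1,-1]^\top$ and $\vp(\rho)=\tfrac12(\vone+\rho\ve)$, then uses $\vone^\top\ve=0$ and $\ve^\top\ve=2$ exactly as you use the orthogonal-idempotent relations for $P_\pm$. Your projector phrasing is just a normalized repackaging of the same decomposition, so there is nothing substantive to add.
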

\begin{proof}
The first two are trivial properties. For the third one, notice that $M(\rho) = \frac12(\vone\vone^T + \rho\ve\ve^\top )$, in which $\ve := [1, -1]^\top$. Therefore, $\ve^\top\ve = 2$ and $\vone^\top\ve = 0$ and thus: 
\begin{equation}
    M(\rho_1)M(\rho_2) = \frac14(\vone\vone^T + \rho_1\ve\ve^\top)(\vone\vone^T + \rho_2\ve\ve^\top) = \frac12(\vone\vone^\top + \rho_1\rho_2 \ve\ve^\top) = M(\rho_1\rho_2)
\end{equation}
For the last one, note that $\vp(\rho) = \frac12(\vone+\rho\ve)$ and the conclusion follows.
\end{proof}

\begin{definition}[Definition of $\hblt$]
In $\hblt(\rho)$, $P_{\beta|\alpha} = M(\rho_{\beta|\alpha})$, where $\rho_{\beta|\alpha} \in [-1,1]$ is the \emph{uncertainty} parameter. In particular, if $\rho_{\beta|\alpha} = \rho$, then we just write the entire \hblt{} model as $\hblt(\rho)$. 
\end{definition}

\begin{lemma}
For latent $y_\alpha$ and its descendent $y_\gamma$, we have: 
\begin{equation}
P_{\gamma | \alpha} = P_{\gamma | \beta_1}P_{\beta_1|\beta_2}\ldots P_{\beta_k|\alpha} = M\left(\rho_{\gamma|\alpha} \right) 
\end{equation}
where $\rho_{\gamma|\alpha} := \rho_{\gamma|\beta_1}\rho_{\beta_1|\beta_2}\ldots\rho_{\beta_k|\alpha}$ and $\alpha \succ \beta_1 \succ \beta_2 \succ \ldots \succ \beta_k \succ \gamma$ is the descendent chain from $y_\alpha$ to $y_\gamma$.
\end{lemma}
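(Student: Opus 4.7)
The plan is to combine two ingredients: (i) the Markov property of the tree along the ancestor-descendant chain, which reduces the conditional $\Pr[y_\gamma\mid y_\alpha]$ to a telescoping product of the one-step conditionals, and (ii) the multiplicative law $M(\rho_1)M(\rho_2)=M(\rho_1\rho_2)$ already established for the family $\{M(\rho)\}$. The conclusion will then follow by induction on the chain length $k$.

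First I would set up the Markov step. Fix the descendant chain $\alpha \succ \beta_1 \succ \cdots \succ \beta_k \succ \gamma$. Because the \hblt{} is a directed tree, conditional on the value of an intermediate latent $\beta_i$, the sub-tree rooted at $\beta_i$ (which contains $\gamma$) is independent of any ancestor of $\beta_i$ (in particular $\alpha$). Applying this d-separation together with the law of total probability, I marginalize over $\beta_1,\ldots,\beta_k$:
\begin{equation}
\Pr[y_\gamma\mid y_\alpha] = \sum_{y_{\beta_1},\ldots,y_{\beta_k}} \Pr[y_\gamma\mid y_{\beta_1}]\,\Pr[y_{\beta_1}\mid y_{\beta_2}]\cdots\Pr[y_{\beta_k}\mid y_\alpha].
\end{equation}
Writing this in matrix form yields precisely
\begin{equation}
P_{\gamma\mid\alpha} = P_{\gamma\mid\beta_1}\,P_{\beta_1\mid\beta_2}\cdots P_{\beta_k\mid\alpha},
\end{equation}
which is the first equality in the statement.

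For the second equality, I would plug in the definition of \hblt$(\rho)$: each one-step conditional is $P_{\beta_i\mid\beta_{i+1}}=M(\rho_{\beta_i\mid\beta_{i+1}})$, and similarly $P_{\gamma\mid\beta_1}=M(\rho_{\gamma\mid\beta_1})$, $P_{\beta_k\mid\alpha}=M(\rho_{\beta_k\mid\alpha})$. Now I invoke the third bullet of the preceding lemma (which is already proved), $M(\rho_1)M(\rho_2)=M(\rho_1\rho_2)$, and apply it inductively on $k$: the base case $k=0$ (a single edge) is trivial, and the inductive step collapses $M(\rho_{\gamma\mid\beta_1})\cdot M\!\left(\rho_{\beta_1\mid\beta_2}\cdots\rho_{\beta_k\mid\alpha}\right)$ into $M\!\left(\rho_{\gamma\mid\beta_1}\rho_{\beta_1\mid\beta_2}\cdots\rho_{\beta_k\mid\alpha}\right) = M(\rho_{\gamma\mid\alpha})$ by the definition of $\rho_{\gamma\mid\alpha}$. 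This gives exactly the claimed identity.

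There is no real obstacle here; the only subtlety is being careful about the d-separation justification for the Markov factorization (so that the matrix product genuinely equals the conditional), and about the direction/order of multiplication (the factor $P_{\gamma\mid\beta_1}$ must appear leftmost so that the chain of column-stochastic matrices composes correctly). Both points are immediate from the tree structure and the symmetry of $M(\rho)$ (which in fact makes column/row orientation irrelevant), so the proof reduces to a one-line induction once the factorization is written down.
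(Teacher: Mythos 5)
Your proof matches the paper's argument essentially verbatim: the paper also expands $\pr[y_\gamma|y_\alpha]$ as the sum over intermediate latents of the chained one-step conditionals (justified by the tree structure), identifies this with the matrix product $P_{\gamma|\beta_1}P_{\beta_1|\beta_2}\cdots P_{\beta_k|\alpha}$, and then collapses it via the identity $M(\rho_1)M(\rho_2)=M(\rho_1\rho_2)$ from the preceding lemma. Your added remarks on d-separation and induction just make explicit what the paper leaves implicit; the approach is the same and correct.
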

\begin{proof}
Due to the tree structure of \hblt{}, we have:
\begin{equation}
    \pr[y_\gamma|y_\alpha] = \sum_{y_{\beta_1},y_{\beta_2},\ldots, y_{\beta_k}} \pr[y_\gamma|y_{\beta_1}]\pr[y_{\beta_1}|y_{\beta_2}]\ldots \pr[y_{\beta_k}|y_\alpha]
\end{equation}
which is precisely how the entries of $P_{\gamma | \beta_1}P_{\beta_1|\beta_2}\ldots P_{\beta_k|\alpha}$ get computed. By leveraging the property of $M(\rho)$, we arrive at the conclusion. 
\end{proof}

\hiercla*
\begin{proof}
Let the common latent ancestor (CLA) of $y_{\beta_1}$ and $y_{\beta_2}$ be $y_c$, then we have:
\begin{equation}
    \pr[y_{\beta_1},y_{\beta_2}] = \sum_{y_c} \pr[y_{\beta_1}|y_c] \pr[y_{\beta_2}|y_c] \pr[y_c] 
\end{equation}
Let $P_{\beta_1\beta_2} = [\pr[y_{\beta_1},y_{\beta_2}]]$, then we have:
\begin{equation}
    P_{\beta_1\beta_2} = M(\rho_{\beta_1|c})D(c)M^\top(\rho_{\beta_2|c})
\end{equation}
where $D(c) := \diag(\pr[y_c]) = \frac12\left[
\begin{array}{cc}
1+\rho_c & 0 \\
0 & 1-\rho_c
\end{array}
\right]$ is a diagonal matrix, and $\rho_c := 2\pr[y_c = 0] - 1$. Note that
\begin{equation}
    \vone^\top D(c)\vone = \ve^\top D(c)\ve = 1,\quad\quad \vone^\top D(c)\ve = \ve^\top D(c)\vone = \rho_c
\end{equation}
And $M(\rho) = \frac12(\vone\vone^T + \rho\ve\ve^\top)$, therefore we have:
\begin{eqnarray}
    P_{\beta_1\beta_2} &=& M(\rho_{\beta_1|c})D(c)M^\top(\rho_{\beta_2|c}) \\
    &=& \frac14(\vone\vone^T + \rho_{\beta_1|c}\ve\ve^\top)D(c)(\vone\vone^T + \rho_{\beta_2|c}\ve\ve^\top) \\
    &=& \frac14\left(\vone\vone^T + \rho_{\beta_1|c}\rho_{\beta_2|c}\ve\ve^\top + \rho_{\beta_1|c}\rho_c\ve\vone^\top + \rho_{\beta_2|c}\rho_c\vone\ve^\top\right) 
\end{eqnarray}
Now we compute $\rho_c$. Note that 
\begin{equation}
    \pr[y_c] = \sum_{y_0}\pr[y_c|y_0]\pr[y_0]
\end{equation}
Let $\vp_c := [\pr[y_c]]$ be a 2-dimensional vector. Then we have $\vp_c = P_{y_c|y_0} \vp_0 = \vp(\rho_{c|0}\rho_0)$, where $\vp_0$ is the probability distribution of class label $y_0$, which can be categorical of size $C$:
\begin{eqnarray}
    \vp_c &=& P_{y_c|y_0}\vp_0 = \sum_{y_1} P_{y_c|y_1} P_{y_1|y_0}\vp_0 \\
    &=& M(\rho_{c|1}) \frac{1}{2}\left[
        \begin{array}{cccc}
            1+p_{1|0} & 1+p_{2|0} & \ldots & 1+p_{C|0} \\ 
            1-p_{1|0} & 1-p_{2|0} & \ldots & 1-p_{C|0} 
        \end{array}
    \right]\vp_0 \\
    &=& M(\rho_{c|1}) \frac{1}{2}\left[
        \begin{array}{c}
            1+\vp_{\cdot|0}^\top \vp_0 \\ 
            1-\vp_{\cdot|0}^\top \vp_0  
        \end{array}
    \right] \\
    &=& M(\rho_{c|1} \vp_{\cdot|0}^\top \vp_0)
\end{eqnarray}
in which $y_1$ is the last binary variable right below the root node class label $y_0$.

Therefore, $\rho_c =  \rho_{c|1}\rho_{0}$, where $\rho_{0} :=  \vp_{\cdot|0}^\top \vp_0$ is the uncertainty parameter of the root node $y_0$.

If all $\rho_{\beta|\alpha} = \rho$ for immediate parent $y_\alpha$ and child $y_\beta$, $y_{\beta_1}$ is for token $l$ and $y_{\beta_2}$ is for token $m$,  then $\rho_{\beta_1|c} = \rho_{\beta_2|c} = \rho^H$, and $\rho_{c|1} = \rho^{L-1-H}$ and thus we have:

\begin{eqnarray}
    \pr[y_l = 1|y_m = 1] &=& \frac{\pr[y_l = 1,y_m = 1]}{\pr[y_m=1]} = \frac12 \left(\frac{1+ \rho^{2H} - 2\rho^H \rho_c}{1 - \rho^H\rho_c}\right) \\
    &=& \frac12 \left(\frac{1+ \rho^{2H} - 2\rho^{L-1} \rho_{0}}{1 - \rho^{L-1}\rho_{0}}\right)
\end{eqnarray}
and the conclusion follows.
\end{proof}

\section{More Experiment Results}
\subsection{\rebuttal{Orthogonality of embedding vectors}}
\label{sec:appendix-orth}
\rebuttal{
We verify the orthogonality assumption mentioned in our problem setting (Sec.~\ref{sec:problem-setting}). The orthogonality is measured by absolute cosine similarity $\mathrm{cossim}(\vx_1, \vx_2) \in [0,1]$ of two vectors $\vx_1$ and $\vx_2$:
}
\rebuttal{
\begin{equation}
   \mathrm{cossim}(\vx_1, \vx_2) := \frac{|\vx^\top_1 \vx_2|}{\|\vx_1\|\|\vx_2\|} 
\end{equation}
Here the two vectors $\vx_1$ and $\vx_2$ are column vectors of the out-projection (or upper) matrix of MLPs at different layers, each corresponding to one hidden neuron. For a MLP layer with model dimension $d$ and hidden dimension $4d$, there will be $4d$ such column vectors. We measure the average cosine similarity across all $2d (4d-1)$ pairs and report in the figure.
}

\rebuttal{
While $4d$ $d$-dimensional vectors have to be linearly dependent, they are indeed almost orthogonal (i.e., $\mathrm{cossim}(\vx_1,\vx_2) \ll 1$) throughout the training process, as shown below. In Fig.~\ref{fig:orthgonal_train}, we show cosine similiarity over the entire training process of Pythia models of different sizes. Fig.~\ref{fig:orthgonal_early_train} further checks the training curve at early training stages, since Pythia checkpoints are more densely sampled around early training stages, i.e., ``steps 0 (initialization), 1, 2, 4, 8, 16, 32, 64, 128, 256, 512, 1000, and then every 1,000 subsequent steps''~\citep{biderman2023pythia}. Finally, for models whose intermediate checkpoints are not available, we show the cosine similarity in the publicly released pre-trained models (Fig.~\ref{fig:orthgonal_more_arch}). 
}

\begin{figure}[htb]
    \centering
    \includegraphics[width=1\textwidth]{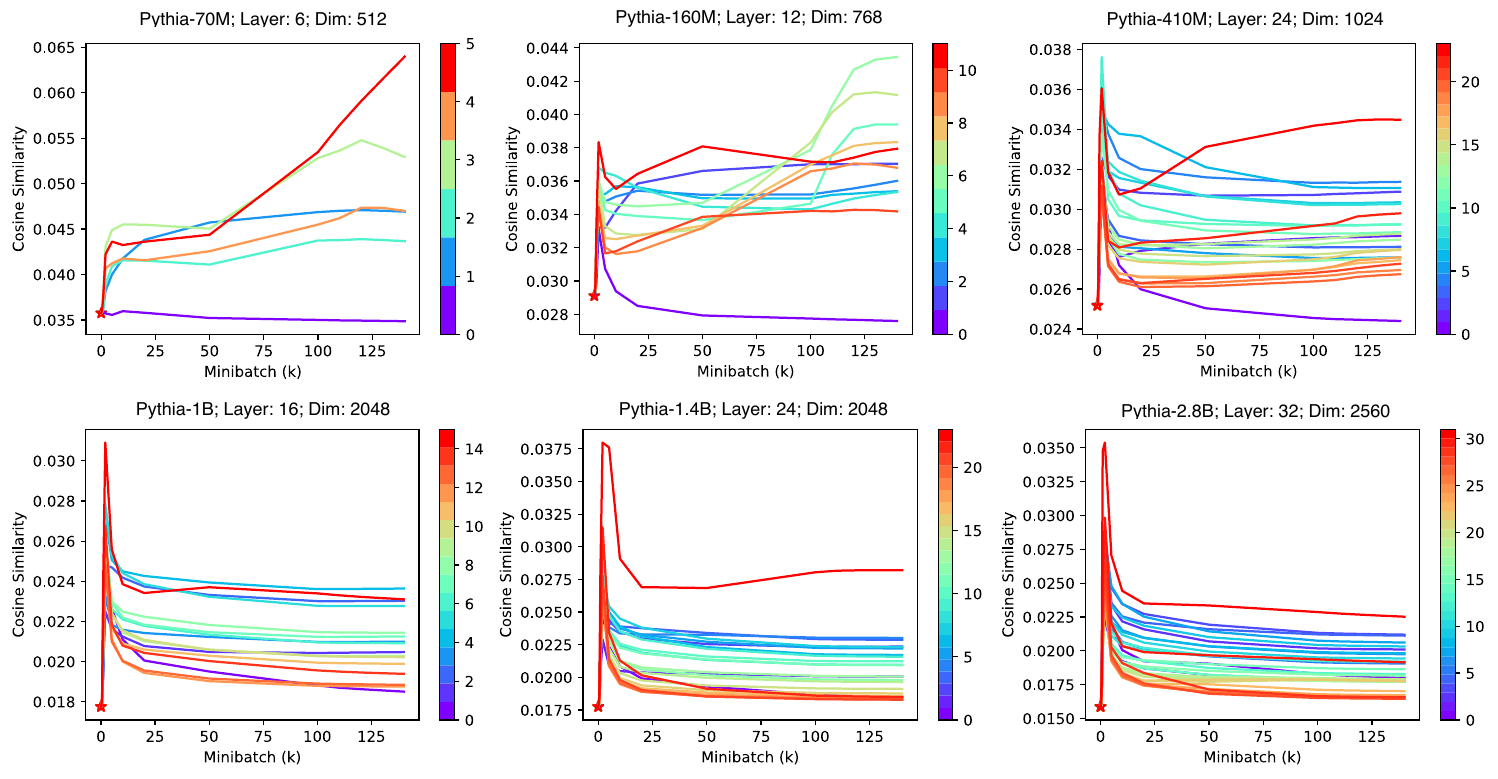}
    \caption{\rebuttal{\small Orthogonality of embeddings of MLP in LLMs during the whole training process.}}
\label{fig:orthgonal_train}
\end{figure}

\begin{figure}[htb]
    \centering
    \includegraphics[width=1\textwidth]{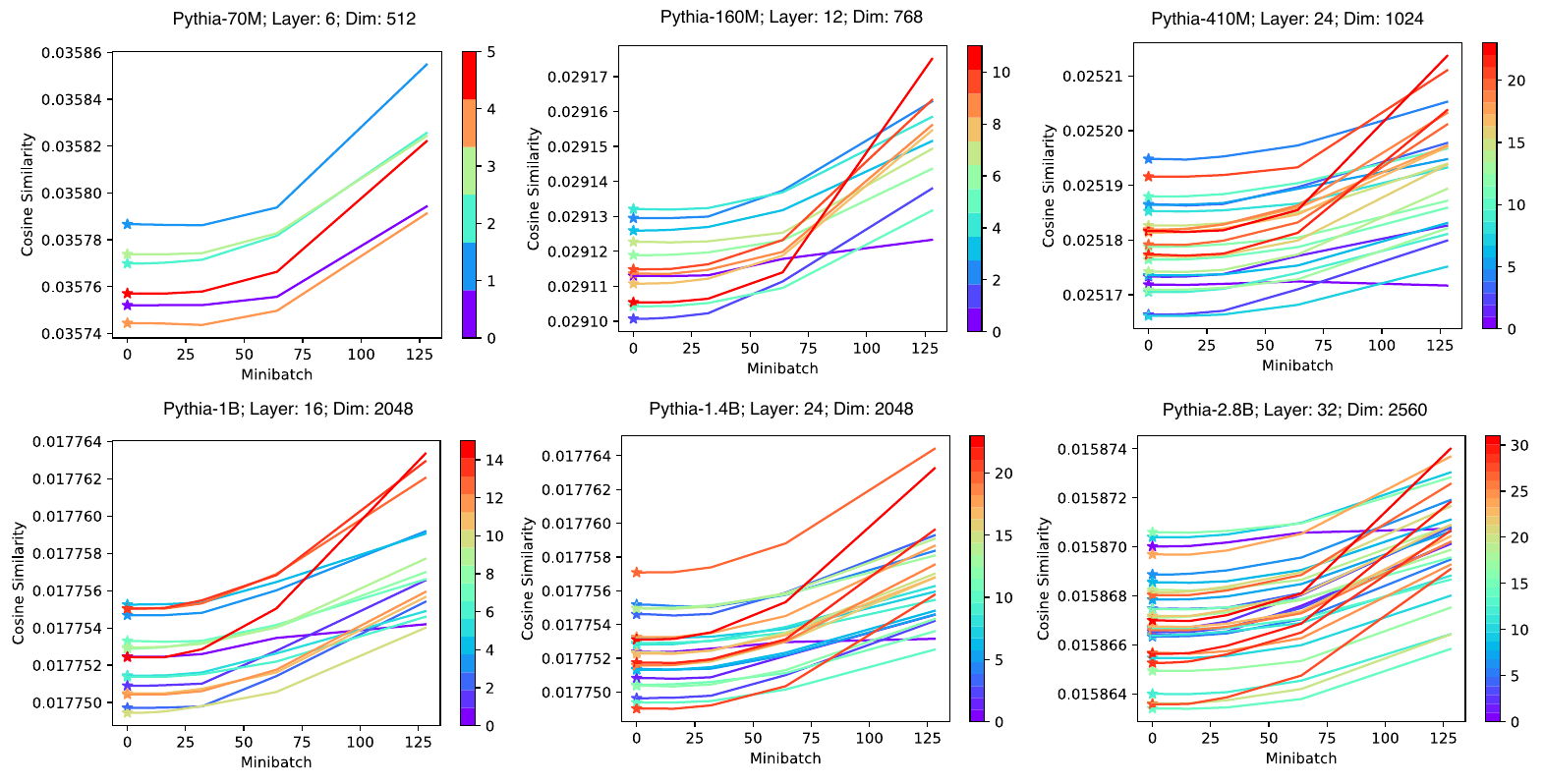}
    \caption{\rebuttal{\small Orthogonality of embeddings of MLP in LLMs during the early training stage.}}
\label{fig:orthgonal_early_train}
\end{figure}

\begin{figure}[htb]
    \centering
    \includegraphics[width=0.48\textwidth]{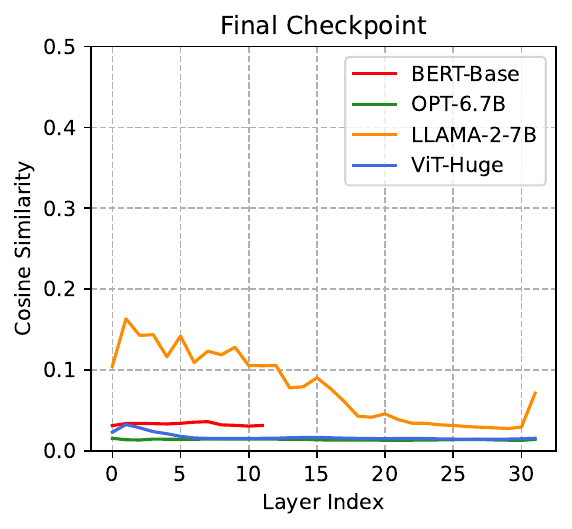}
    \caption{\rebuttal{\small Orthogonality measures in model architectures (BERT-Base, OPT-6.7B, LLaMA-2-7B, ViT-Huge), with only final checkpoint available.}}
\label{fig:orthgonal_more_arch}
\end{figure}

\subsection{\rebuttal{Attention Entropy for Encoder-decoder models}}
\rebuttal{
We also measure how attention entropy, as well as stable rank of the in-projection (or lower) matrix in MLP, changes over time for encoder-decoder models like BERT, as shown in Fig.~\ref{fig:bert}. The behavior is very similar to the decoder-only case (Fig.~\ref{fig:attn-dyn-low-rank-opt-pythia}), further verifying our theoretical findings.} 

\begin{figure}[htb]
    \centering
    \includegraphics[width=0.95\textwidth]{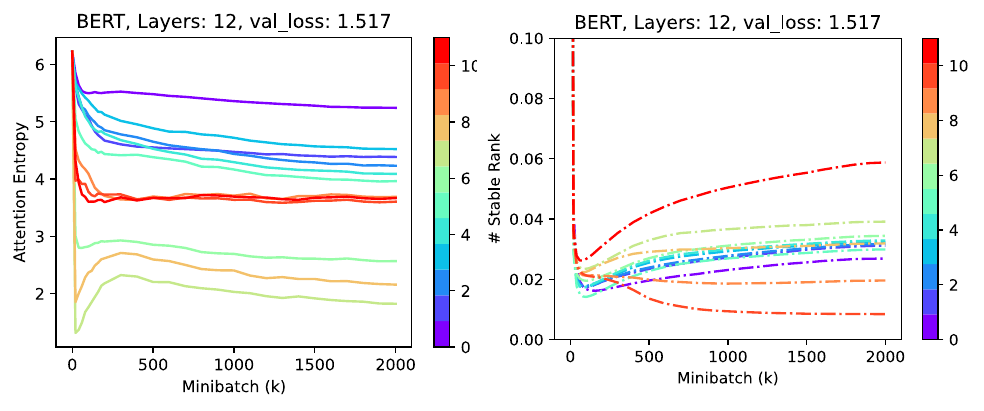}
    \caption{\rebuttal{\small (Left) Attention entropy of BERT; (Right) Stable rank in BERT.}}
\label{fig:bert}
\end{figure}

\end{document}